\documentclass{article} 
\usepackage{iclr2027_conference,times}
\usepackage{comment}

\usepackage{amsmath,amsfonts,bm,bbm,mathtools}

\def\eqref#1{equation~\ref{#1}}

\def\1{\bm{1}}

\DeclareMathAlphabet{\mathsfit}{\encodingdefault}{\sfdefault}{m}{sl}
\SetMathAlphabet{\mathsfit}{bold}{\encodingdefault}{\sfdefault}{bx}{n}

\def\gX{{\mathcal{X}}}

\newcommand{\E}{\mathop{\mathbb{E}}\nolimits}   
\renewcommand{\Pr}{\operatorname{Pr}}           

\newcommand{\R}{\mathbb{R}}

\newcommand{\KL}{D_{\mathrm{KL}}}
\newcommand{\Var}{\mathrm{Var}}

\newcommand{\Cov}{\mathrm{Cov}}

\DeclareMathOperator*{\argmin}{argmin}

\usepackage{amsthm}
\theoremstyle{plain}
\newtheorem{definition}{Definition}
\newtheorem{proposition}{Proposition}

\newcommand{\alphabet}{\Sigma}
\newcommand{\alphabar}{\alphabet\cup\{\eos\}}   
\newcommand{\kleene}[1]{#1^{*}}                   

\newcommand{\eos}{\textsc{eos}}

\newcommand{\str}{\boldsymbol{y}}                 

\newcommand{\prefix}{\boldsymbol{y}}              

\newcommand{\plm}{p}
\newcommand{\qlm}{q}
\newcommand{\pprob}{\vec{\plm}}                   
\newcommand{\qpprob}{\vec{\qlm}}                  

\newcommand{\strv}{Y}                             
\newcommand{\yvar}{\boldsymbol{Y}}                

\newcommand{\defeq}{\mathrel{\stackrel{\lower0.5pt\hbox{\fontsize{6pt}{6pt}\selectfont\normalfont def}}{=}}}  

\newcommand{\defn}[1]{\textbf{#1}}

\newcommand{\iid}{\overset{\text{i.i.d.}}{\sim}}

\newcommand{\estimand}{\theta}             
\newcommand{\stot}{S}                 
\newcommand{\horizon}{T}              

\newcommand{\cont}{\boldsymbol{z}}    
\definecolor{prefixgreen}{HTML}{000000}
\newcommand{\xprefix}{\textcolor{prefixgreen}{\boldsymbol{x}}} 
\newcommand{\zprefix}{\textcolor{prefixgreen}{\boldsymbol{z}}} 
\newcommand{\zvar}{\textcolor{prefixgreen}{\boldsymbol{Z}}}   

\newcommand{\simplex}{\triangle}      

\newcommand{\defpropto}{\mathrel{\stackrel{\lower0.5pt\hbox{\fontsize{6pt}{6pt}\selectfont\normalfont def}}{\propto}}} 

\definecolor{rowaccent}{HTML}{000000}
\newcommand{\cardstrut}{\rule[-1.15ex]{0pt}{3.5ex}}
\newcommand{\rowtag}[1]{\cardstrut\rlap{\hspace{0.9em}%
  \smash{\textcolor{rowaccent}{\rule[-0.85ex]{1.4pt}{2.6ex}}\hspace{0.5em}%
  {\scriptsize\sffamily\scshape\textcolor{rowaccent}{#1}}}}}

\newsavebox{\famcardbox}
\newlength{\famcardwidth}
\newcommand{\cardcell}[1]{\sbox0{#1}%
  \ifdim\wd0<\famcardwidth \wd0=\famcardwidth \fi
  \usebox0}

\newcommand{\testf}{f}                
\newcommand{\ppot}{{\vec{\phi}}}      

\newcommand{\cvcoef}{\alpha}          
\newcommand{\astar}{\alpha^{\star}}   

\newcommand{\hest}{\hat\theta}                              

\newcommand{\thmc}{\hat\theta_{\mathrm{MC}}}
\newcommand{\thIS}{\hat\theta_{\mathrm{IS}}}
\newcommand{\thfr}{\hat\theta_{\mathrm{FR}}}               

\newcommand{\cv}{G}                   

\newcommand{\filt}{\mathcal{F}}       

\usepackage{mathtools}
\usepackage{ellipsis}
\makeatletter
\renewcommand*{\mathellipsis}{%
  \mathinner{%
    \kern\ellipsisbeforegap%
    {\ldotp}\kern\ellipsisgap%
    {\ldotp}\kern\ellipsisgap%
    {\ldotp}\kern\ellipsisaftergap%
  }%
}
\renewcommand*{\dotsb@}{%
  \mathinner{%
    \kern\ellipsisbeforegap%
    {\cdotp}\kern\ellipsisgap%
    {\cdotp}\kern\ellipsisgap%
    {\cdotp}\kern\ellipsisaftergap%
  }%
}
\renewcommand*{\@cdots}{%
  \mathinner{%
    \kern\ellipsisbeforegap%
    {\cdotp}\kern\ellipsisgap%
    {\cdotp}\kern\ellipsisgap%
    {\cdotp}\kern\ellipsisaftergap%
  }%
}
\renewcommand*{\ellipsis@default}{%
  \ellipsis@before
  \kern\ellipsisbeforegap
  .\kern\ellipsisgap
  .\kern\ellipsisgap
  .\kern\ellipsisgap
  \ellipsis@after\relax}
\renewcommand*{\ellipsis@centered}{%
  \ellipsis@before
  \kern\ellipsisbeforegap
  .\kern\ellipsisgap
  .\kern\ellipsisgap
  .\kern\ellipsisaftergap
  \ellipsis@after\relax}
\AtBeginDocument{%
  \DeclareRobustCommand*{\dots}{%
    \ifmmode\@xp\mdots@\else\@xp\textellipsis\fi}}
\def\ellipsisgap{-.05em}
\def\ellipsisbeforegap{-.05em}
\def\ellipsisaftergap{.05em}
\makeatother
\DeclareMathOperator{\tr}{tr}                 

\usepackage{xcolor}
\usepackage{microtype}
\definecolor{darkgreen}{RGB}{0,100,0}
\usepackage{mathtools}
\usepackage{hyperref}
\usepackage{url}
\usepackage{graphicx}
\usepackage{booktabs}
\usepackage{natbib}
\usepackage{enumitem}
\usepackage{array}
\usepackage{booktabs}
\usepackage{tabularx}
\usepackage{algorithm}
\usepackage{xcolor}
\usepackage{algorithm}
\usepackage[
    indLines=true,
    noEnd=true,
    spaceRequire=false,
    commentColor=black!55
]{algpseudocodex}

\algrenewcommand\algorithmicrequire{\textbf{Input:}}
\algrenewcommand\algorithmicensure{\textbf{Output:}}

\tikzset{
    algpxIndentLine/.style={draw=black!30, very thin}
}
\usepackage{inconsolata}  
\usepackage{capt-of}  
\usepackage{placeins} 
\usepackage{float}    
\usepackage{wrapfig}   
\usepackage{colortbl}

\usepackage{arydshln} 

\usepackage{textcomp}   

\usepackage{tikz}
\usetikzlibrary{positioning,fit,backgrounds,calc}
\usepackage[most]{tcolorbox}
\tikzset{dtline/.style={draw=black!55, line width=0.6pt}}
\newlength{\dtheady}
\newcommand{\dtsetx}[1]{\pgfmathsetmacro{\dtx}{(#1)==0 ? -18 : (#1)*16-17}}
\newcommand{\dtfull}[1]{%
  \foreach \dtl in {#1}{\ifnum\dtl>-1 \dtsetx{\dtl}
    \draw[dtline] ([xshift=\dtx pt]frame.north west)
      -- ([xshift=\dtx pt]frame.south west);\fi}}
\newcommand{\dthead}[2]{%
  \ifnum#1>0 \dtsetx{#1-1}
    \draw[dtline] ([xshift=\dtx pt]frame.north west)
      -- ([xshift=\dtx pt,yshift=-\dtheady]frame.north west)
      -- ([xshift=#1*16pt-28pt,yshift=-\dtheady]frame.north west);\fi
  \ifnum#2=1 \dtsetx{#1}
    \ifnum#1=0
      \draw[dtline] ([xshift=-12pt,yshift=-\dtheady]frame.north west)
        -- ([xshift=\dtx pt,yshift=-\dtheady]frame.north west)
        -- ([xshift=\dtx pt]frame.south west);
    \else
      \draw[dtline] ([xshift=\dtx pt,yshift=-\dtheady-5pt]frame.north west)
        -- ([xshift=\dtx pt]frame.south west);\fi\fi}
\newcommand{\dtrest}[2]{%
  \ifnum#2=1 \dtsetx{#1}
    \draw[dtline] ([xshift=\dtx pt]frame.north west)
      -- ([xshift=\dtx pt]frame.south west);\fi}
\newcommand{\dtleft}[1]{\ifnum#1=0 7pt\else\dimexpr #1\dimexpr 16pt\relax - 5pt\relax\fi}
\NewTColorBox{dtnode}{m m m}{enhanced jigsaw, breakable, frame hidden,
  interior hidden, boxrule=0pt, boxsep=0pt,
  left=\dtleft{#1},
  right=0pt, top=2pt, bottom=2pt, before skip=0pt, after skip=0pt,
  before upper={\setlength{\parskip}{3pt}\strut},
  overlay unbroken={\dtfull{#2}\dthead{#1}{#3}},
  overlay first={\dtfull{#2}\dthead{#1}{#3}},
  overlay middle={\dtfull{#2}\dtrest{#1}{#3}},
  overlay last={\dtfull{#2}\dtrest{#1}{#3}}}
\newcommand{\dtdown}[1]{\dtsetx{#1}
  \draw[dtline] ([xshift=\dtx pt]frame.north west)
    -- ([xshift=\dtx pt,yshift=-\dtheady+5pt]frame.north west);}
\newcommand{\dtbottomlabel}[1]{\if\relax\detokenize{#1}\relax\else\dtsetx{0}
  \node[dttagstyle, rotate=90, anchor=west]
    at ([xshift=\dtx pt,yshift=4pt]frame.south west) {#1};\fi}
\NewTColorBox{dtstack}{m m m}{enhanced jigsaw, breakable, frame hidden,
  interior hidden, boxrule=0pt, boxsep=0pt,
  left=\dtleft{#1},
  right=0pt, top=2pt, bottom=2pt, before skip=0pt, after skip=0pt,
  before upper={\setlength{\parskip}{3pt}\strut},
  overlay unbroken={\dtfull{#2}\dtdown{#1}\dtbottomlabel{#3}},
  overlay first={\dtfull{#2}\dtdown{#1}},
  overlay middle={\dtfull{#2}},
  overlay last={\dtfull{#2}\dtbottomlabel{#3}}}
\tikzset{dttagstyle/.style={draw=black!70, fill=white, line width=0.5pt,
  inner sep=1.3pt, minimum width=22pt, font=\scriptsize\sffamily\bfseries}}

\newlength{\dttaggutter}
\newlength{\dtcardextra}
\newsavebox{\dtcardbox}
\newcommand{\dtcard}[2]{\par\vspace{2pt}\noindent
  \sbox{\dtcardbox}{\footnotesize
    \setlength{\tabcolsep}{0pt}\renewcommand{\arraystretch}{1.25}%
    \setlength{\aboverulesep}{0pt}\setlength{\belowrulesep}{0pt}%
    \begin{tabular}{#1}#2\end{tabular}}%
  \ifdim\wd\dtcardbox<\dimexpr\linewidth+\dtcardextra-\dttaggutter\relax
    \makebox[\dimexpr\linewidth+\dtcardextra-\dttaggutter\relax][c]{\usebox{\dtcardbox}}%
  \else
    \usebox{\dtcardbox}%
  \fi\par}
\newcolumntype{V}{!{\vrule width 0.5pt}}

\newcommand{\dtcardhead}[1]{{\scriptsize\sffamily\scshape\textcolor{rowaccent}{#1}}}

\usepackage[disable,textsize=footnotesize,colorinlistoftodos]{todonotes}
\newif\ifshowtodos\showtodosfalse
\usepackage{hyperref}
\definecolor{linkblue}{RGB}{0,60,160}
\definecolor{viceviolet}{HTML}{5B2A86}
\definecolor{viceteal}{HTML}{00A3A8}
\definecolor{vicepink}{HTML}{E4007C}
\hypersetup{colorlinks=true,linkcolor=viceviolet,citecolor=viceteal,urlcolor=vicepink}
\usepackage{fontawesome5} 
\usepackage{needspace}
\usepackage{cleveref}
\usepackage{etoc} 
\crefname{section}{\S}{\S\S}
\crefname{table}{Tab.}{Tab.}
\crefname{figure}{Fig.}{Figs.}
\crefname{algorithm}{Alg.}{}
\crefname{equation}{Eq.}{Eq.}
\crefname{appendix}{App.}{}
\crefname{theorem}{Theorem}{}
\crefname{prop}{Proposition}{}
\crefname{definition}{Def.}{}
\crefname{cor}{Corollary}{}
\crefname{observation}{Observation}{}
\crefformat{section}{\S#2#1#3}
\Crefformat{section}{\S#2#1#3} 
\crefname{namedtheorem}{Hyp.}{Hypotheses}
\crefname{footnote}{footnote}{footnotes}
\Crefname{footnote}{Footnote}{Footnotes}

\usepackage{xcolor}
\usepackage{cancel}

\newcounter{cmtno}
\newcommand{\cmark}[2]{
  \refstepcounter{cmtno}%
  \ifshowtodos
  \rlap{\smash{\raisebox{1.4ex}{\tikz[baseline]{\node[fill=#1,text=white,inner sep=0.8pt,%
    rounded corners=1pt,font=\tiny\bfseries,anchor=base]{#2\thecmtno};}}}}%
  \fi
}

\newcommand{\timv}[1]{\cmark{darkgreen!70}{T}\todo[color=darkgreen!25,size=\tiny]{\textbf{[T\thecmtno] timv:} #1}}
\newcommand{\Timv}[1]{\todo[inline,color=darkgreen!25]{\textbf{timv:} #1}}
\newcommand{\shubhangi}[1]{\cmark{orange!80}{S}\todo[color=yellow!25,size=\tiny]{\textbf{[S\thecmtno] shubhangi:} #1}}

\newcommand{\response}[1]{\textbf{#1:}}

\newcommand{\claude}[1]{#1}

\renewcommand{\hat}[1]{\widehat{#1}}

\title{Estimating Great Expectations under Autoregressive Language Models with Potentials}

\author{Francesco Ignazio Re$^{1}$ \quad Shubhangi Ghosh$^{2}$ \quad Tim Vieira$^{1}$ \quad Ryan Cotterell$^{1}$ \\
$^{1}$ETH Z\"urich \quad $^{2}$Columbia University \\
\texttt{\href{mailto:francesco.re@inf.ethz.ch}{francesco.re@inf.ethz.ch}} \quad
\texttt{\href{mailto:sg4156@columbia.edu}{sg4156@columbia.edu}} \\
\texttt{\href{mailto:tim.vieira@gmail.com}{tim.vieira@gmail.com}} \quad
\texttt{\href{mailto:ryan.cotterell@inf.ethz.ch}{ryan.cotterell@inf.ethz.ch}}}

\iclrfinalcopy 
\begin{document}

\maketitle
\lhead{Preprint} 

\begin{abstract}
Many applications of language models require estimating the expectation of a test functional under the model. Obtaining precise estimates can be computationally expensive. We show how to improve estimation precision by exploiting next-token conditional probabilities available as a by-product of sampling. Our approach uses potentials: real-valued functions of prefixes that induce an additive decomposition of the test functional. We construct a family of estimators parameterized by these potentials and derive conditions for variance reduction. Across applications including entropy estimation, watermark detection, and rare-event probability estimation, our estimators achieve substantial variance reductions relative to Monte Carlo and importance sampling baselines at comparable computational cost.
\par\smallskip
\claude{\centering\href{https://anonymous.4open.science/r/great_expectations_lm_potential-0DF4/}{\faGithub\ Code Repository}\par}
\end{abstract}

\etocsettocdepth.toc{none}

\section{Introduction}
\label{sec:intro}

A language model is a probability distribution over strings.
The most
common application of such a model is to generate free-form text from a prompt
\citep{radford2019language,brown2020language}.
However, for some applications, the quantity of
interest is not the sample itself but rather the expectation of a test functional under the
model.
For instance,
\begin{itemize}[leftmargin=*,nosep]
  \item In reinforcement learning from human feedback, proximal policy
  optimization \citep[PPO;][]{schulman2017proximal} regularizes the policy
  by penalizing its Kullback--Leibler divergence from a reference model
  \citep{christiano2017deep,stiennon2020learning,ouyang2022training}, a
  divergence that is itself an expectation under the policy.
  \item In AI safety, one is often interested in the probability that a model
  emits a harmful or otherwise undesirable output \citep{jones2025forecasting,angell2026estimating}.
  \item In distillation, one fits a simpler surrogate, such as an $n$-gram model,
  to a large language model by matching expectations such as the cross-entropy
  under the teacher \citep{hinton2015distilling, agarwal2024policy}.
  \item In language-model watermarking, one may be interested in the probability that a generated text is
detected as watermarked; that can be viewed as an expectation under the model's 
distribution \citep{kirchenbauer2023watermark}.
  \item In interpretability, distributional functionals quantify how much a
  given context shifts a model's predictions
\citep{pezeshkpour2023measuring,du2024context,stoehr2024activation}.
\end{itemize}

More precisely, given a language model $\plm$ and a test function $\testf$, we wish to compute the expectation of $\testf$ under $\plm$. Unless $\plm$ and $\testf$ exhibit exploitable structure\footnote{When $\plm$ is a probabilistic finite-state automaton \citep{rabin1963probabilistic} or a probabilistic context-free grammar \citep{booth1973applying} and $\testf$ is additively decomposable, one can compute such expectations with dynamic programming~\citep{eisner2002parameter,li2009first,zmigrod2021efficient}.}, this expectation must be approximated. \emph{Monte Carlo methods} offer a well known approach: sample strings, evaluate $\testf$ on them, and average the results. Drawing independent strings directly from $\plm$ yields the standard Monte Carlo estimator; \shubhangi{this phrase about the proposal seems awkward. I would add it at the end of the paragraph, where maybe we can have one sentence about the existing variance reduction frameworks before introducing ours.}drawing instead from a proposal distribution and reweighting by the corresponding likelihood ratio yields \emph{importance sampling}. These estimators are unbiased but can have high variance, so achieving a desired level of precision may require generating many samples at substantial computational cost.

In this paper, we introduce a variance-reduction framework that exploits the next-token distributions computed as a by-product of sampling. The idea is to write $\testf$ as an additive decomposition of token-level increments along the string and to replace each increment with its conditional expectation given the prefix, taken over the next token. Applied to an existing unbiased estimator, this procedure always preserves unbiasedness while making the variance depend on the decomposition.
The choice of decomposition is thus the central design choice in the framework.

For a given $\testf$, we parametrize the family of its additive decompositions using \emph{potentials}: real-valued functions on prefixes. We borrow this construction from potential-based reward shaping in reinforcement learning, where potentials redistribute rewards along a trajectory to facilitate learning \citep{ng1999policy}. Here, we repurpose potentials to redistribute the function value over marginalizable token level increments and refer to the resulting estimator as the \emph{potential estimator}. \looseness=-1

We first show that there exists an oracle potential yielding a zero-variance estimator. Since it is generally intractable, we derive an exact criterion for when an arbitrary potential reduces variance. We then use the oracle potential to guide potential design: when 
$\testf$ has known additive structure, we use  components of the oracle potential that can be computed directly from 
$\testf$; for arbitrary functions over strings, we approximate the oracle potential by learning it from samples. In general, given a set of candidate potentials, we introduce a potential-selection procedure that preserves unbiasedness while using the full sample.

Our experiments demonstrate substantial gains in statistical efficiency across four applications and several estimands. For losses and divergences, simple potentials reduce variance in all tested configurations, yielding up to \(17\times\) the effective sample size of Monte Carlo. Their gradient counterparts also improve on Monte Carlo in every evaluated comparison, with variance reductions of up to \(89\%\). {For expected \(n\)-gram counts and frequencies, our estimator produces even larger gains: about \(99\%\) lower variance than Monte Carlo after accounting for computational cost (median over targets with lower estimation uncertainty).}  For non-additive event probabilities, approximations to the oracle potential are also effective: our estimator reduces the variance of watermark-detection estimates by a median \(36\%\), while learned potentials selected through nested cross-fitting reduce the variance of the importance-sampling estimator introduced by \citep{dorman2026rare} by a median \(12\%\). In the latter experiment, this corresponds to approximately \(900{,}000\) additional baseline generations. Overall, our results establish potential design as a practical strategy for improving expectation estimates after sampling, across both direct Monte Carlo and importance sampling. We discuss related work in \Cref{app:related}. \looseness=-1

\section{Expectations under Autoregressive Language Models}
\label{sec:prelim}


\subsection{Language Models}
\label{sec:prelim:lm}

Let $\alphabet$ be a finite, non-empty set of symbols, and let $\eos \notin \alphabet$ be a distinguished \defn{terminal symbol}. A
\defn{string} is a finite sequence of symbols from $\alphabet$. Let
$\kleene{\alphabet}$ denote the (countably infinite) set of all such
strings. For $\str,\str' \in \kleene{\alphabet}$, we write $\str \succeq \str'$
when $\str'$ is a prefix of $\str$, and $\str' \prec \str$ when $\str'$ is a strict
prefix of $\str$. 

A \defn{language model} $\plm$ is a
probability distribution over complete strings taking values in $\kleene{\alphabet}$, i.e., $\plm \in \simplex(\kleene{\alphabet})$, where $\simplex(\gX)$ denotes the set
of all probability distributions over a countable set $\gX$.
We write $\yvar \sim \plm$ for a random string drawn from $\plm$, so that
$\yvar \succeq \prefix$ is the event that $\yvar$ extends $\prefix$.
For any distribution $\plm$, we use the following notation for the \defn{prefix probability} of a string $\prefix \in \kleene{\alphabet}$, which gives the marginal probability that a string sampled from $\plm$ extends $\prefix$:
\begin{equation}\label{eq:prefixprob}
\pprob(\prefix) \defeq \Pr[\yvar \succeq \prefix] =\sum_{\cont \in \kleene{\alphabet}} \plm(\prefix\cont).
\end{equation}
The \defn{conditional prefix probability}\timv{It's worth writing the conditional prefix probability as a conditional Pr expression given that we bothered to do that above. {\textbf{reply:} you mean like this?}} is then {$\pprob(\cont \mid \prefix) \defeq \Pr[\yvar \succeq \prefix\,\cont \mid \yvar \succeq \prefix] = \pprob(\prefix\,\cont)/\pprob(\prefix)$}, and we extend
$\pprob(\cdot \mid \prefix)$ to the terminal symbol $\eos$ by $\pprob(\eos \mid \prefix) \defeq
\plm(\prefix)/\pprob(\prefix)$. Whenever $\pprob(\prefix) = 0$, we use the convention
that $\pprob(\cont \mid \prefix) \defeq 0$ for every $\cont \in \alphabet$ and
$\pprob(\eos \mid \prefix) \defeq 1$, so that $\pprob(\cdot \mid \prefix)$ remains a
valid conditional distribution over $\alphabar$. \timv{it's honestly not as simple as the chain rule as that assumes a fixed collection of random variables, so I would rephrase this as being a valid factorization of the probability of any given string \response{francesco} do you think this works?} Using the conditional prefix probabilities, the probability of any string  $\str = y_{1}\cdots y_{|\str|} \in \kleene{\alphabet}$ under a language model admits a left-to-right autoregressive factorization:
\begin{equation}\label{eq:lm-chainrule}
\plm(\str)
=
\pprob(\eos \mid \str)\,
\prod_{n=1}^{|\str|}
\pprob(y_n \mid \str_{<n}),
\end{equation}
where $\str_{<n} \defeq y_{1}\cdots y_{n-1}$. Each $\pprob(\cdot \mid \str_{<n})$ in \cref{eq:lm-chainrule} is a probability distribution over $\alphabet \cup \{\eos\}$.

\subsection{Problem Statement}
\label{sec:prelim:problem}
Let $\plm \in \simplex(\kleene{\alphabet})$ be a language model and let
$\testf \colon \kleene{\alphabet} \to \R$\timv{what about vector-valued functions?  Our methods work fine for them with minor tweaks, should we say that?  If you want to talk about estimating a gradient, you will need that ability for example!} be a real-valued \defn{test function}. We study the problem of estimating
\begin{equation}\label{eq:target-string}
\estimand \defeq \E_{\yvar \sim \plm}[\testf(\yvar)] = \sum_{\str \in \kleene{\alphabet}} \plm(\str)\testf(\str),
\end{equation}
assuming throughout that \(\estimand\) is finite. For modern language
models, this expectation is rarely available in closed form. The string space
$\kleene{\alphabet}$ is countably infinite, and summing exactly over all strings is generally
infeasible. A standard solution is to approximate such expectations using Monte Carlo methods. We review the canonical Monte Carlo estimator and importance sampling in the remainder of this section. In the next section, we introduce potential-based estimation, an additional estimation stage that can reduce the variance of these estimators after generation by reusing the sampled strings and the model’s next-token distributions retained at sampling time.

\subsection{Monte Carlo Estimation}
\label{sec:prelim:mc}

The most direct estimator of $\estimand$ is an empirical average over sampled strings. We assume throughout the paper that $\E_{\plm}\!\left[\testf(\yvar)^2\right] < \infty$ \timv{and finite expectation? \response{francesco}Yes I agree this is clearer. } and let $\yvar^{(1)},\ldots,\yvar^{(M)} \iid \plm $ . The \defn{Monte Carlo estimator} and its variance are\timv{the expression for variance is a little weird.  Isn't it better to write it as $\frac{1}{M}(E[f^2] - \theta^2)$.  You can even write the finite variance conditions more precisely by saying finite first and second moments $E[f]$ and $E[f^2]$. (Ofc, you should use the more precise notation, rather than my shorthand.)}
\begin{equation}\label{eq:mc-variance-string}
\thmc \defeq \frac{1}{M}\sum_{m=1}^{M} \testf(\yvar^{(m)}),
\qquad
\Var_{\plm}(\thmc) = \frac{1}{M}
\left(
\E_{\plm}\!\left[\testf(\yvar)^2\right]
-
\estimand^2
\right).
\end{equation}
Because the samples are drawn i.i.d.\ from $\plm$ and $\estimand$ is finite, $\thmc$ is unbiased, $\E[\thmc] = \estimand$. Moreover, the strong law of large numbers guarantees that $\thmc$ is \emph{strongly consistent}: it converges almost surely to $\estimand$ as $M \to \infty$ \citep{owen2013monte,robert2004monte}.



\subsection{Importance Sampling}
\label{sec:importance-sampling}

Importance sampling estimates an expectation under $\plm$ using samples from a proposal distribution $\qlm$.  {Assume that $\qlm(\str)>0$ whenever $\plm(\str)|\testf(\str)|>0$. For $\str\in\kleene{\alphabet}$, define $f_q(\str)\defeq \plm(\str)\testf(\str)/\qlm(\str)$ when $\qlm(\str)>0$, and $f_q(\str)\defeq0$ otherwise.} Then $\estimand=\E_{\yvar\sim\plm}[\testf(\yvar)]=\E_{\yvar\sim\qlm}[f_q(\yvar)]$.

Thus, for $M$ independent samples $\yvar^{(1)},\ldots,\yvar^{(M)}\iid\qlm$, the \defn{importance-sampling estimator} is simply the Monte Carlo estimator of $f_q$ under $\qlm$, such that
\begin{equation}
\label{eq:is-estimator}
\thIS
\defeq
\frac{1}{M}\sum_{m=1}^{M}\frac{\plm(\yvar^{(m)})}{\qlm(\yvar^{(m)})}\testf(\yvar^{(m)}),
\qquad
\Var_{\qlm}(\thIS)=\frac{1}{M}\bigl(\E_{\qlm}[f_q(\yvar)^2]-\estimand^2\bigr).
\end{equation}

This estimator is unbiased for every such proposal distribution $\qlm$, but its variance may be lower or higher than that of $\thmc$ in \Cref{eq:mc-variance-string} \citep{robert2004monte,owen2013monte}. Its variance is small when the weighted contribution $\tfrac{\plm(\yvar)}{\qlm(\yvar)}\testf(\yvar)$ varies little under $\qlm$. When $\E_{\plm}[|\testf(\yvar)|]>0$, the variance-minimizing proposal is $\qlm^\star(\str)\defpropto\plm(\str)|\testf(\str)|$, which assigns probability in proportion to each string’s absolute contribution to the expectation and makes the magnitude of the weighted contribution constant. \Cref{prop:is-variance} makes this precise.

\section{Estimating Expectations with Potentials}
\label{sec:sequential-decompositions}
We seek to construct a lower-variance estimator by exploiting the target model's next-token distributions along sampled strings, which are already computed when sampling from $\plm$ or evaluating $\plm(\yvar)$ for importance weighting under a proposal $\qlm$. For notational simplicity, we assume $\qlm=\plm$ throughout this section; the general case $\qlm\neq\plm$ is given in \Cref{app:importance-sampling} (\nameref{app:importance-sampling}).

\subsection{Additive decompositions over prefixes: potentials}
Our construction begins with a potential, a prefix-level function $\ppot\colon \kleene{\alphabet}\to\R$ assigning a real value to each prefix. We use its increments over successive prefixes to define token-level contributions along each sampled string. Specifically, for the fixed test function $\testf$, we define the increment operator $\Delta$, which maps a potential $\ppot$ to its increment function $\Delta\ppot$, given by
\begin{equation}
\label{eq:increment}
\Delta\ppot(a\mid\prefix)\defeq\ppot(\prefix a)-\ppot(\prefix)\quad(a\in\alphabet),
\qquad
\Delta\ppot(\eos\mid\prefix)\defeq\testf(\prefix)-\ppot(\prefix).
\end{equation}
For notational ease, we write $\Delta\ppot(a\mid\prefix)$ for
$[\Delta\ppot](a\mid\prefix)$, and use the same convention whenever an
operator acts on a function.
Using this notation, given the random string $\yvar \sim \plm$ and its  length $|\yvar|$, we can rewrite $f(\yvar)$ as
\begin{equation}
\label{eq:additive-decomposition}
\testf(\yvar)
=
\ppot(\varepsilon)
+
\sum_{t=1}^{|\yvar|}
\Delta\ppot(\strv_t\mid\yvar_{<t})
+
\Delta\ppot(\eos\mid\yvar).
\end{equation}

This representation expresses $\testf(\yvar)$ as a sum of random token-level increments $\Delta\ppot(\strv_t \mid \yvar_{<t})$ along the sampled string $\yvar$. At each realized prefix $\str_{<t}$, we already have the next-token distribution $\pprob(\cdot \mid \str_{<t})$, which provides the probabilities needed to compute the next increment's conditional expectation by averaging over possible next tokens. In the next subsection, we show how such expected value can be used for variance reduction.


\subsection{The Potential Estimator}

We use the \defn{Bellman operator} to map a potential to its expected value over the next token:\timv{You need to explain that $\mathbf B\ppot(\prefix)$ is supposed to be read as $[\mathbf B\ppot](\prefix)$ rather than the more obvious $\mathbf B[\ppot(\prefix)]$, i.e., you are deliberately messing with the default operator precedence to be more concise.  The Bellman operator isn't the only place that you are doing this, so be upfront about where and why you are doing it. \response{francesco}  I introduce the increment as an operator in equation (6). When you said that we are not doing it only with the Bellman operator I assumed you meant we are also doing it with the increment $\Delta$? }
\begin{equation}
\mathbf B\ppot(\prefix)
\defeq
\sum_{a\in\alphabet}
\pprob(a\mid\prefix)\,\ppot(\prefix a)
+
\pprob(\eos\mid\prefix)\,\testf(\prefix).
\end{equation}

The \emph{expected potential increment} at prefix $\prefix$ is then
\begin{equation}
\label{eq:bellman-averaged-increment}
\mathbf B\Delta\ppot(\prefix)
\defeq
\E_{A\sim\pprob(\cdot\mid\prefix)}
\bigl[\Delta\ppot(A\mid\prefix)\bigr]
=
\mathbf B\ppot(\prefix)-\ppot(\prefix).
\end{equation}
Substituting these expectations in \cref{eq:additive-decomposition} gives the following
estimator. 

\begin{definition}[Potential estimator]
\label{def:potential-estimator}
Let $\yvar^{(1)},\ldots,\yvar^{(M)}\iid\plm$. Given a potential $\ppot$, define
\begin{equation}
\label{eq:potential-estimator}
\hest(\ppot)
\defeq
\ppot(\varepsilon)
+
\frac{1}{M}
\sum_{m=1}^{M}
\sum_{t=1}^{|\yvar^{(m)}|+1}
\mathbf B\Delta\ppot(\yvar^{(m)}_{<t}),
\qquad
\Var_{\plm}\bigl(\hest(\ppot)\bigr)
=
\frac{1}{M}
\Var_{\plm}\!\left(
\sum_{t=1}^{|\yvar|+1}
\mathbf B\Delta\ppot(\yvar_{<t})
\right).
\end{equation}
The inner sum includes the final prefix $\yvar^{(m)}$, at which $\eos$ is drawn.
\end{definition}

We assume that the sum of potential increments along a sampled string has finite second moment. Under this condition, the estimator is unbiased (\Cref{prop:unbiasedness}), while its variance depends on the choice of potential. This raises the natural question of when a potential reduces variance, and how much reduction is possible. We answer this question in the next proposition, which identifies a zero-variance potential, which we call the oracle potential, and characterizes exactly when an arbitrary potential improves on the unbiased baseline.

\begin{proposition}[Oracle potential and variance comparison]
\label{prop:oracle-potential}
\label{prop:potential-estimator}
The \defn{oracle potential}, defined on prefixes of positive probability,
satisfies
\[
\ppot^\star(\prefix)
=
\E_{\plm}\!\left[
\testf(\yvar)\mid\yvar\succeq\prefix
\right]
\quad\Longrightarrow\quad
\hest(\ppot^\star)=\estimand,
\qquad
\Var_{\plm}\!\left(\hest(\ppot^\star)\right)=0.
\]
 Let $\hest_*$ be any unbiased
baseline estimator under $\plm$. For every fixed potential $\ppot$, 
\[
\Var_{\plm}\bigl(\hest(\ppot)\bigr)
\leq
\Var_{\plm}(\hest_*)
\iff
\Cov_{\plm}\bigl(\hest_*,\hest_*-\hest(\ppot)\bigr)
\geq
\frac{1}{2}
\Var_{\plm}\bigl(\hest_*-\hest(\ppot)\bigr).
\]
\end{proposition}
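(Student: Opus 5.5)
The proof has two independent parts, and I would treat them separately.

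\emph{Zero variance of the oracle.} I would show that with $\ppot^\star(\prefix)=\E_{\plm}[\testf(\yvar)\mid\yvar\succeq\prefix]$, every expected increment $\mathbf B\Delta\ppot^\star(\prefix)$ vanishes on prefixes of positive probability. Take such a prefix $\prefix$ and condition on the next symbol, i.e.\ on the events $\{\yvar\succeq\prefix a\}$ for $a\in\alphabet$ and $\{\yvar=\prefix\}$. These events partition $\{\yvar\succeq\prefix\}$ and have conditional probabilities $\pprob(a\mid\prefix)$ and $\pprob(\eos\mid\prefix)$, by the definitions in \cref{sec:prelim:lm}. The tower property then gives
\[
\ppot^\star(\prefix)=\sum_{a\in\alphabet}\pprob(a\mid\prefix)\,\ppot^\star(\prefix a)+\pprob(\eos\mid\prefix)\,\testf(\prefix)=\mathbf B\ppot^\star(\prefix).
\]
Terms with $\pprob(a\mid\prefix)=0$ contribute nothing, so it does not matter that $\ppot^\star$ is undefined at zero-probability extensions. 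The exchange of sum and conditional expectation is justified because $\E_{\plm}[\testf(\yvar)^2]<\infty$ makes $\testf(\yvar)$ integrable on the event $\{\yvar\succeq\prefix\}$. Hence $\mathbf B\Delta\ppot^\star(\prefix)=\mathbf B\ppot^\star(\prefix)-\ppot^\star(\prefix)=0$.

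Every prefix $\yvar^{(m)}_{<t}$ of a sample, including the full string $\yvar^{(m)}$ itself, has positive probability almost surely. The inner sum in \cref{eq:potential-estimator} is therefore $0$ almost surely, so $\hest(\ppot^\star)=\ppot^\star(\varepsilon)=\E_{\plm}[\testf(\yvar)]=\estimand$ almost surely. The variance is then $0$.

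\emph{Variance comparison.} This part is pure algebra, valid for any two square-integrable random variables. Set $D\defeq\hest_*-\hest(\ppot)$. Both estimators have finite second moments: $\hest(\ppot)$ by the standing assumption on potential increments, and $\hest_*$ by hypothesis. Writing $\hest(\ppot)=\hest_*-D$ and expanding,
\[
\Var_{\plm}(\hest(\ppot))=\Var_{\plm}(\hest_*)-2\Cov_{\plm}(\hest_*,D)+\Var_{\plm}(D).
\]
So $\Var_{\plm}(\hest(\ppot))\le\Var_{\plm}(\hest_*)$ holds if and only if $2\Cov_{\plm}(\hest_*,D)\ge\Var_{\plm}(D)$, which is exactly the stated criterion. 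Unbiasedness of either estimator is not needed for this equivalence; it only makes the comparison meaningful, and \Cref{prop:unbiasedness} supplies it for $\hest(\ppot)$.

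\emph{Main obstacle.} The only delicate point is the oracle part. One must make sure the Bellman identity is applied only at positive-probability prefixes, and that the almost-sure argument covers the final prefix at which $\eos$ is drawn. In that last term, $\testf(\prefix)$ enters through $\pprob(\eos\mid\prefix)$, and this is precisely how the identity $\mathbf B\ppot^\star=\ppot^\star$ accounts for strings that end at $\prefix$. Once the identity holds at every prefix visited by a sample, everything else is routine.
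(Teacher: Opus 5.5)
Your proposal is correct and follows essentially the same route as the paper. You establish the recursion $\ppot^\star=\mathbf B\ppot^\star$ on positive-probability prefixes by splitting on the next symbol, conclude that every expected increment along a sampled string vanishes so that $\hest(\ppot^\star)=\ppot^\star(\varepsilon)=\estimand$, and get the variance criterion by expanding $\Var_{\plm}(\hest_*-D)$ with $D=\hest_*-\hest(\ppot)$; the only cosmetic difference is that you justify integrability via the second-moment assumption, whereas the paper uses $\E_{\plm}|\testf(\yvar)|<\infty$.
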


Despite the above condition need not hold for a chosen potential, we can exploit the unbiasdness of $\hest_*$ and $\hest(\ppot)$ to build a control-variate estimator with variance equal or lower than $\hest_*$.

\begin{definition}[Potential control-variate estimator]
\label{def:potential-cv}
Given a potential $\ppot$ and an unbiased baseline estimator $\hest_*$,
define
\begin{equation}
\label{eq:potential-cv}
\hest(\ppot,\cvcoef)
\defeq
\hest_*
-
\cvcoef\bigl(\hest_*-\hest(\ppot)\bigr),
\qquad
\cvcoef\in\R.
\end{equation}
\end{definition}

The coefficient $\cvcoef$ controls the contribution of the correction.
Setting $\cvcoef=0$ recovers the baseline estimator, while $\cvcoef=1$
recovers the potential estimator. As is standard for control variates, we choose this coefficient to minimize variance. The next proposition gives the optimal choice.

\begin{proposition}[Unbiasedness and optimal coefficient]
\label{prop:optimal-alpha}
For every fixed potential $\ppot$ and fixed coefficient $\cvcoef$,
the estimator is unbiased:
$\E_{\plm}[\hest(\ppot,\cvcoef)]=\estimand$.
If both $\hest_*$ and $\hest(\ppot)$ have finite second moments and
$\Var_{\plm}\bigl(\hest_*-\hest(\ppot)\bigr)>0$, the
variance-minimizing coefficient satisfies
\begin{equation}
\label{eq:optimal-alpha}
\astar
=
\frac{
\Cov_{\plm}\bigl(\hest_*,\hest_*-\hest(\ppot)\bigr)
}{
\Var_{\plm}\bigl(\hest_*-\hest(\ppot)\bigr)
}
\quad\Longrightarrow\quad
\Var_{\plm}\bigl(\hest(\ppot,\astar)\bigr)
\leq
\Var_{\plm}(\hest_*).
\end{equation}
If the difference has zero variance, the two estimators agree almost
surely, and the choice of coefficient has no effect.
\end{proposition}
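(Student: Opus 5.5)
Unbiasedness follows from linearity of expectation. By assumption $\E_{\plm}[\hest_*]=\estimand$, and by \Cref{prop:unbiasedness} $\E_{\plm}[\hest(\ppot)]=\estimand$ for every fixed potential $\ppot$ whose summed increments have finite second moment. Hence
\[
\E_{\plm}[\hest(\ppot,\cvcoef)]
=(1-\cvcoef)\,\E_{\plm}[\hest_*]+\cvcoef\,\E_{\plm}[\hest(\ppot)]
=\estimand
\]
for every fixed $\cvcoef\in\R$. The point to stress is that $\cvcoef$ and $\ppot$ are deterministic. Choosing them from the same samples would break this argument, which is why a data-dependent coefficient needs separate treatment later.

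For the optimal coefficient, set $D\defeq\hest_*-\hest(\ppot)$. Both estimators have finite second moments, so by Cauchy--Schwarz all variances and covariances below are finite. Expanding the variance of the estimator $\hest_*-\cvcoef D$ gives the quadratic
\[
V(\cvcoef)\defeq\Var_{\plm}(\hest_*)-2\cvcoef\,\Cov_{\plm}(\hest_*,D)+\cvcoef^2\,\Var_{\plm}(D).
\]
When $\Var_{\plm}(D)>0$ this is strictly convex in $\cvcoef$. Setting $V'(\cvcoef)=0$ yields the unique minimizer $\astar=\Cov_{\plm}(\hest_*,D)/\Var_{\plm}(D)$, as in \cref{eq:optimal-alpha}. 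Substituting back gives
\[
V(\astar)=\Var_{\plm}(\hest_*)-\frac{\Cov_{\plm}(\hest_*,D)^2}{\Var_{\plm}(D)}\le\Var_{\plm}(\hest_*).
\]
One can also note directly that $V(\astar)\le V(0)=\Var_{\plm}(\hest_*)$ because $\astar$ minimizes $V$ and $\cvcoef=0$ is admissible.

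For the degenerate case $\Var_{\plm}(D)=0$, $D$ is almost surely equal to its mean. That mean is $\estimand-\estimand=0$ by the unbiasedness above, so $\hest_*=\hest(\ppot)$ almost surely. Then $\hest(\ppot,\cvcoef)=\hest_*$ almost surely for every $\cvcoef$, and the coefficient has no effect.

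No step is a real obstacle. The only care needed is checking that the finite-second-moment hypotheses make every covariance well defined, and keeping $\cvcoef$ fixed rather than estimated.
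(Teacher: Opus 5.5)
Your proof is correct and follows the paper's argument step for step. Unbiasedness comes from linearity and the unbiasedness of $\hest_*$ and of $\hest(\ppot)$ via \Cref{prop:unbiasedness}. The optimal coefficient is the minimizer of the strictly convex quadratic in $\cvcoef$ written with $D=\hest_*-\hest(\ppot)$. The degenerate case follows because $D$ is almost surely equal to its mean, which is zero.

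One small point: \Cref{prop:unbiasedness} is stated under absolute summability of the increments in expectation, not a finite second moment of their sum. That is the hypothesis to cite when you invoke it.
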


Having characterized the role of the potential and the optimal coefficient, we next turn to their practical construction and estimation. In the next section, we describe how to design and select potentials and estimate parameters such as $\alpha^*$ from the same samples, without sacrificing unbiasedness.



\section{Potential Design in practice: applications and experiments}
\label{sec:practice}
In this section, we show how to construct potentials across several estimands and applications, and ask whether the resulting potential estimators achieve lower variance at comparable computational cost. Depending on the structure of $\testf$, we follow one of two complementary approaches: we either exploit known additive structure to derive a potential analytically, or learn a potential from data. We present three applications in the main text and a fourth in \Cref{app:exp-watermark} (\nameref{app:exp-watermark}).
\subsection{Estimating proper losses, divergences, and gradients}
\label{sec:app-losses}

\begin{wraptable}{r}{0.50\linewidth}
\vspace{-\baselineskip}
\setlength{\dtcardextra}{0pt}
\refstepcounter{table}\label{tab:entropy}%
\dtcard{@{}V@{\hspace{5pt}}l@{\hspace{5pt}}V@{}}{%
\toprule
\dtcardhead{Table~\thetable: Entropy} \\
\midrule
$\testf(\yvar)=-\log \plm(\yvar)$\\

\midrule
{$\begin{aligned}
\ppot^\star(\xprefix)&=-\log\pprob(\xprefix)\\
&\quad+\E_{\plm}\!\left[-\log\frac{\plm(\yvar)}{\pprob(\xprefix)}\;\middle|\;\yvar\succeq\xprefix\right]
\end{aligned}$}\\
\midrule
$\ppot(\xprefix)=-\log \pprob(\xprefix)$\\
\midrule
$\Delta\ppot(a\mid\xprefix)=-\log \pprob(a\mid\xprefix)$\\
\midrule
$\Delta\ppot(\eos\mid\xprefix)=-\log \pprob(\eos\mid\xprefix)$\\
\bottomrule}
\end{wraptable}
We estimate seven losses frequently used in language model training and evaluation along with their gradients: entropy \citep{dai2018credit,braverman2020calibration}, KL divergence \citep{schulman2017proximal, ouyang2022training}, Brier loss and power-score regrets \citep{shao2024language} with exponents $\beta \in \{1.25, 1.5, 1.75, 2\}$.

\paragraph{Defining a potential analytically using the function value at the observed prefix.}\label{par:observed-prefix}\mbox{}\\
The above losses and their gradients  have a natural additive structure over prefixes; for any such function, the value of the function at each observed prefix yields a potential. \Cref{tab:entropy} illustrates this for entropy. {For a complete string $\str\succeq\xprefix$, the chain rule splits $-\log\plm(\str)$ into a prefix term $-\log\pprob(\xprefix)$ and a continuation term $-\log(\plm(\str)/\pprob(\xprefix))$, which includes termination.} The oracle potential adds the expected continuation term to the prefix term; keeping only the prefix term thus approximates the oracle potential by its observed component. Here the increments are the per-token surprisals $-\log \pprob(a \mid \xprefix)$ (with a final increment at \eos), which are computed during sampling anyway, so the potential is free. We define the potentials for the other quantities and their gradients in the appendix: \Cref{app:proper-quantities} gives those for KL divergence, the Brier loss and the power-score regrets (\Cref{tab:kl,tab:brier,tab:power}), and \Cref{app:grad-potentials} those for the gradients (\Cref{tab:grad}).


For KL, our potential estimator coincides with that of \citet{amini2025better}; we reproduce their experiment in \Cref{app:exp-kl} (\nameref{app:exp-kl}). We still include KL in our evaluation because their claimed variance ordering relative to Monte Carlo does not hold in general, as shown by \Cref{prop:oracle-potential} and the counterexample in \Cref{app:counterexample}. Across all quantities and gradients, we therefore test how consistently the estimator reduces variance and whether fitting the control-variate coefficient provides additional gains.

\subsubsection{Experiments and results}
\begin{figure}[!htb]
\centering
\begin{minipage}[t]{0.33\textwidth}
\centering
{\footnotesize\textbf{(a)} variance ratio per quantity}\\[3pt]
\includegraphics[width=\linewidth]{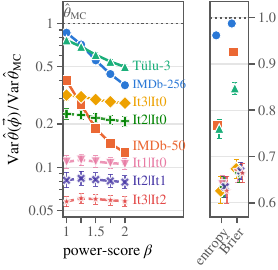}
\end{minipage}\hfill
\begin{minipage}[t]{0.33\textwidth}
\centering
{\footnotesize\textbf{(b)} $\KL$ ratio vs.\ estimated $\KL$}\\[3pt]
\includegraphics[width=\linewidth]{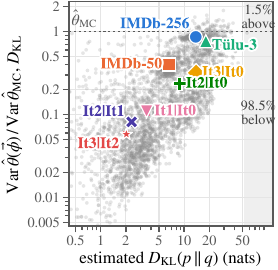}
\end{minipage}\hfill
\begin{minipage}[t]{0.33\textwidth}
\centering
{\footnotesize\textbf{(c)} cross-fitted control variate}\\[3pt]
\includegraphics[width=\linewidth]{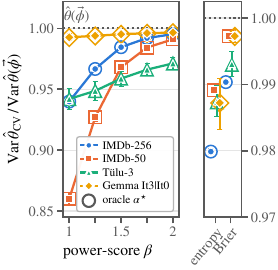}
\end{minipage}
\caption{\footnotesize Variance reduction in the proper-loss experiment (lower is better).
\textbf{(a)} Variance ratio of the potential estimator to Monte Carlo across
power scores ($\beta\to1$ gives $D_{\mathrm{KL}}$); entropy and Brier are shown
on the right. Each series is one policy--reference pair (see \cref{tab:proper-pairs}): solid lines are the four main policy–reference pairs and dashed lines are consecutive Gemma-2 SPPO pairs.
\textbf{(b)} For $D_{\mathrm{KL}}$, variance ratio against the estimated
$D_{\mathrm{KL}}(p\|q)$, with one colored marker per pair and grey markers for
individual prompts from all pairs. The shaded regions indicate prompts below
and above the Monte Carlo variance.
\textbf{(c)} Variance ratio of the cross-fitted control-variate estimator to
the potential estimator; hollow markers use the
full-sample oracle coefficient.
}
\label{fig:exp1-proper}
\end{figure}

\textbf{The potential estimator reduces variance in every configuration,  by up to  $17\times$  in effective sample size.}
We evaluate the estimator on eight configurations from three post-training families (full details in \Cref{app:proper-pairs}, \nameref{app:proper-pairs}). Across all $56$ pair--quantity combinations, the potential reduces variance relative to Monte Carlo (\Cref{fig:exp1-proper}a), with ratios between $0.058$ and $0.987$ and every bootstrap upper bound below one. These correspond to effective sample-size multipliers of $1.01\times$--$17.3\times$.

\textbf{The gains are largest when the policy and reference are close.}
The variation across policy--reference pairs is larger than across estimands. For KL at horizon $256$, the variance ratio is $0.86$ on GPT-2 IMDb, $0.77$ on T\"ulu-3, and $0.32$ on Gemma-2, dropping to $0.06$--$0.11$ between consecutive Gemma SPPO iterations (equivalent to $9.1\times$--$16.7\times$ the Monte Carlo sample size). The same pattern appears across prompts: smaller KL is associated with larger variance reductions (\Cref{fig:exp1-proper}b).

\textbf{Variance reduction holds for the vast majority of prompts, but not all of them}: Monte Carlo has lower variance on $1.5\%$ of all $4{,}096$ prompts, including $12.3\%$ of T\"ulu-3 prompts (95\% interval $9.6\%$--$15.2\%$). This provides empirical evidence that the general KL variance ordering of \citet{amini2025better} does not hold universally; Fitting the control-variate coefficient yields ratios of $0.86$--$1.00$ relative to the untuned potential estimator (\Cref{fig:exp1-proper}c); among KL prompts where the potential estimator increases variance, fitting $\alpha$ lowers the median variance ratio relative to Monte Carlo from $1.134$ to $0.974$. \looseness=-1

\Cref{app:exp-gradients} repeats the comparison for the gradient of each estimand with respect to the policy parameters along a DPO trajectory of the GPT-2 sentiment policy: early in training, the potential removes most of the gradient variance; when it helps little on its own, fitting the control-variate coefficient recovers substantial gains.

\subsection{Estimating Expected Counts and Frequencies of n-grams}

We next study expected counts and frequencies of word $n$-grams, i.e., contiguous
sequences of $n$ words. These statistics arise in applications such as model
distillation \citep{suresh2021approximating,krishnan2023n} and computational
psycholinguistics \citep{opedal-etal-2024-role,kiegeland2026proper}. More broadly, $n$-gram counts are one instance of a general class of
functions $\testf(\yvar)=\#\{\text{occurrences of substring }\boldsymbol{w}\text{ in }\yvar\}$, encompassing quantities such
as the number of forbidden keywords in a string, unsafe API calls, or other keyword-based
patterns emitted by a model.

\paragraph{Designing potentials that marginalize over multiple future tokens.} 
We start by observing that, for any target substring
$\boldsymbol{w}\in\Sigma^\ast$, 
the count function $\testf$ admits two natural decompositions:\[
\testf(\yvar)
=
\sum_t Z^{\mathrm{end}}_{\boldsymbol{w},t}
=
\sum_t Z^{\mathrm{start}}_{\boldsymbol{w},t},
\qquad
Z^{\mathrm{end}}_{\boldsymbol{w},t}
=
\1\{\boldsymbol{w}\text{ ends at }t\},
\qquad
Z^{\mathrm{start}}_{\boldsymbol{w},t}
=
\1\{\boldsymbol{w}\text{ starts at }t\}.
\]
\begin{wraptable}{r}{0.66\linewidth}
\vspace{-0.6\baselineskip}
\setlength{\dtcardextra}{0pt}
\refstepcounter{table}\label{tab:ngram}%
\dtcard{@{}V@{\hspace{5pt}}l@{\hspace{5pt}}V@{}}{%
\toprule
\dtcardhead{Table~\thetable: Substring counts, lookahead occurrence potential} \\
\midrule
$\testf(\yvar)=\#\{\text{occurrences of }\boldsymbol{w}\text{ in }\yvar\}$\\
\midrule
$\testf(\yvar)=\sum_{t}Z^{\mathrm{start}}_{\boldsymbol{w},t}$, \ $Z^{\mathrm{start}}_{\boldsymbol{w},t}=Z^{\mathrm{can}}_{\boldsymbol{w},t}+Z^{\mathrm{noncan}}_{\boldsymbol{w},t}$\\
\midrule
{$\ppot^\star(\xprefix)=\E_{\plm}\!\left[\sum_{t=1}^{|\yvar|}Z^{\mathrm{start}}_{\boldsymbol{w},t}\;\middle|\;\yvar\succeq\xprefix\right]$}\\
\midrule
$\ppot(\xprefix)=\sum_{t\le|\xprefix|}\E_{\plm}\!\left[Z^{\mathrm{can}}_{\boldsymbol{w},t}\mid \yvar_{\le t}=\xprefix_{\le t}\right]$\\
\midrule
$\Delta\ppot(a\mid\xprefix)=\E_{\plm}\!\left[Z^{\mathrm{can}}_{\boldsymbol{w},t}\mid \yvar_{\le t}=\xprefix a\right]$, \ $t=|\xprefix|+1$\\
\midrule
$\Delta\ppot(\eos\mid\xprefix)=\sum_{t\le|\xprefix|}\bigl(Z^{\mathrm{can}}_{\boldsymbol{w},t}-\E_{\plm}[Z^{\mathrm{can}}_{\boldsymbol{w},t}\mid\yvar_{\le t}]\bigr)+\sum_{t}Z^{\mathrm{noncan}}_{\boldsymbol{w},t}$\\
\bottomrule}
\end{wraptable}

The $Z^{\mathrm{end}}_{\boldsymbol{w}}$ decomposition yields a potential that is completely determined by evaluating $f$ at prefix $x$.  Its expected increment, however, is nonzero only at prefixes that already match the substring up to its final token, so marginalization is useful only at those few prefixes.

By contrast, $Z^{\mathrm{start}}_{\boldsymbol{w}}$ is generally still unobserved at the current prefix. We therefore replace it by its full conditional expectation---the probability that the occurrence starting at that position is eventually completed---and take the cumulative expected count over starting positions as the potential.
In practice, we marginalize over the canonical tokenization $\boldsymbol{w}=(w_1,\ldots,w_L)$ with $w_i\in\Sigma$ and $L \defeq |\boldsymbol{w}|$. At each sampled prefix, this requires $L$ additional token evaluations: the remaining target tokens after the next token, plus the word boundary. Thus, if $C$ denotes the cost of generating one sample from model $\plm$, the additional marginalization cost is $O(LC)$. Including generation gives a total per-target token-evaluation cost proxy of $(L+1)C$. \looseness=-1

\Cref{tab:ngram} gives the mathematical definition of the potential using $Z^{\mathrm{start}}_{\boldsymbol{w}}$ and its increments. This example illustrates the flexibility of the telescoping construction in \Cref{eq:additive-decomposition}: the potential may marginalize only a selected component of $\testf$---here, occurrences under the canonical tokenization---while the terminal increment recovers the remainder, so the decomposition still equals $\testf$ exactly at termination.\footnote{In this application, part of the terminal increment has expectation zero under $\plm$ and can therefore be dropped without changing the estimator's expectation, yielding $\widetilde{\Delta}\ppot(\eos\mid\xprefix)=\sum_t Z^{\mathrm{noncan}}_{\boldsymbol{w},t}$.}

\subsubsection{Experiments \& Results}

\textbf{Setup.}
We construct $7{,}647$ word $n$-gram targets ($n=1,\ldots,6$) from the English texts of MECO \citep{siegelman2022expanding}, with canonical tokenization lengths ranging from $1$ to $10$ tokens. We ask whether the potential estimator that uses the potential defined in \cref{tab:ngram} reduces variance at comparable computational cost with MC, and how the gains vary with target rarity;  

\begin{figure}[!t]
\centering
\begin{minipage}[t]{0.33\textwidth}
\centering
{\footnotesize\textbf{(a)} $\widehat{\rho}_c(w)$ with log-ratio 95\% CI}\\[3pt]
\includegraphics[width=\linewidth]{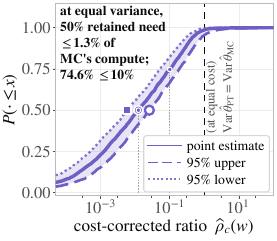}
\end{minipage}\hfill
\begin{minipage}[t]{0.33\textwidth}
\centering
{\footnotesize\textbf{(b)} $\widehat{\rho}_c(w)$ against target rarity}\\[3pt]
\includegraphics[width=\linewidth]{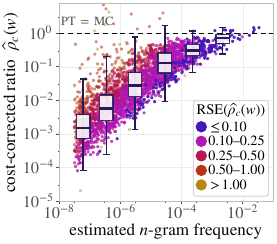}
\end{minipage}\hfill
\begin{minipage}[t]{0.33\textwidth}
\centering
{\footnotesize\textbf{(c)} resolved by token length}\\[3pt]
\includegraphics[width=\linewidth]{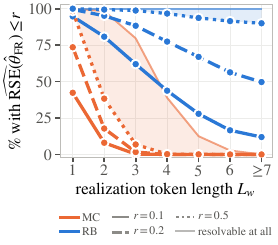}
\end{minipage}
\caption{\footnotesize {Cost-adjusted variance ratio $\widehat{\rho}_c(w)$ for $n$-gram frequency estimation. \textbf{(a)} Empirical cumulative distribution over $3{,}700$ targets with bootstrap RSE at most $2$, with approximate $95\%$ log-ratio confidence bounds. \textbf{(b)} Variance reduction versus estimated frequency for the $2{,}831$ targets with at least $25$ expected occurrences in the independent frequency bank. \textbf{(c)} Fraction of targets whose frequency estimate reaches each RSE threshold, grouped by token length, at the same sample size $M=4{,}000$ for both estimators. Dashed parity lines in (a--b) mark equal variance at equal cost under the proxy; in (c), line styles distinguish RSE thresholds.}}
\label{fig:headline}
\end{figure}

For each target, we estimate its expected count and frequency (the ratio estimator of \Cref{eq:frequency-estimators}, \Cref{app:ngram-freq}) from \(4{,}000\) GPT-2 samples using the potential estimator. We estimate the Monte Carlo variance from an independent \(4\times10^6\)-sample bank drawn from the same model. We then report the cost-adjusted variance ratio \(\widehat{\rho}_c(w)\) (\Cref{eq:variance-ratio}, \Cref{app:ngram-rho}), scaling the potential-estimator variance by the per-target cost proxy \(L_w+1\) and the Monte Carlo variance to \(4{,}000\) samples. Thus, \(\widehat{\rho}_c(w)<1\) indicates lower variance at equal cost under this proxy.

To measure target rarity independently, we estimate each target's frequency from a separate \(2\times10^6\)-sample bank. We report frequency results here because they are easier to interpret, and defer count results to \Cref{app:exp-ngram}; both show the largest gains for rare targets. The appendix also reports results for the simpler observed-prefix potential (\Cref{par:observed-prefix}), which provides virtually no variance reduction.
\paragraph{{Half the retained targets reach equal variance on at most \(1.3\%\) of the Monte Carlo compute.}}
{Panel~\Cref{fig:headline}a reports the cost-adjusted variance ratio \(\widehat{\rho}_c\) for the \(3{,}700\) of \(7{,}647\) targets estimated with relative standard error at most \(2\), with bootstrap \(95\%\) confidence intervals. The median ratio is \(1.24\%\), and \(74.7\%\) of targets require at most \(10\%\) of the Monte Carlo compute. Using the upper confidence bound instead gives a median of \(2.56\%\), with \(95.6\%\) of targets still below parity. This shows that the efficiency gains remain substantial even under a conservative treatment of uncertainty.}


\textbf{At the same sample size ($M=4{,}000$), the potential estimator yields substantially more precise frequency estimates}, with $79.7\%$ of targets reaching RSE $\leq 20\%$ and $96.5\%$ reaching RSE $\leq 50\%$, compared with $11.4\%$ and $19.0\%$ for Monte Carlo (see \Cref{fig:headline}c). The gap grows with token length: for targets of length $4$ or more, Monte Carlo almost never reaches RSE $\leq 50\%$, whereas the potential estimator does so for at least $90\%$ of targets in every length group. For $L_w\geq 7$, Monte Carlo does not observe any target even in the $4\times10^6$-sample reference bank, while the potential estimator gives a finite estimate for all of them.


\subsection{Event analysis with language models}
\label{sec:app-events}
\leavevmode\citet{dorman2026rare} introduce an importance-sampling framework for estimating probabilities of events defined by a real-valued function $\testf$ over strings. They partition the range of $\testf$ into $J$ intervals $\{D_j\}_{j=1}^J$, sample from distributions tilted toward different values of $\testf$, and reweight the resulting trajectories to estimate $\Pr_{\plm}[\testf(\yvar)\in D_j]$, for each $j$ . We augment their estimator with potentials for the corresponding event indicators $\testf_j(\yvar)=\1\{\testf(\yvar)\in D_j\}$.

\paragraph{Learning potentials from samples.}
We note that these indicators need not admit a useful additive decomposition over prefixes. A simple baseline is the \emph{terminal potential}, which assigns zero to unfinished prefixes and the indicator value $\testf_j(\yvar)$ at the string termination. To exploit information from earlier prefixes, we instead learn an approximation to the oracle potential: the conditional probability that $\testf_j(\yvar)=1$ given the observed prefix. Each sampled prefix $\xprefix_i$, $i=1,\ldots,B$, is paired with its trajectory's outcome $\testf_j(\str_i)$, $\str_i\succeq\xprefix_i$. Using prefix features $\boldsymbol{\xi}(\xprefix)\in\mathbb{R}^d$, we fit a predictor\footnote{When trajectories are obtained through importance sampling as in this application, we weight each prefix's prediction loss by its trajectory's importance weight, so that fitting targets conditional event probabilities under the original language model.} $g_j(\cdot;\boldsymbol{\eta})\colon\mathbb{R}^d\rightarrow[0,1]$ with parameters $\boldsymbol{\eta}\in\mathbb{R}^{m}$. We can define the potential \timv{Notation consistence: use $\Pr[\cdot]$ instead of $\Pr(\cdot)$ \response{francesco} Done}
\begin{equation}
\label{eq:learned-potential}
\ppot_j(\xprefix;\boldsymbol{\eta})
=
g_j(\boldsymbol{\xi}(\xprefix);\boldsymbol{\eta})
\approx
\ppot_j^\star(\xprefix)
=
\Pr_{\plm}[\testf_j(\yvar)=1\mid\yvar\succeq\xprefix].
\end{equation}

\paragraph{Unbiased estimation with learned potentials.}
Fitting and evaluating potentials on the same samples can introduce bias. We avoid this through $K$-fold cross-fitting \citep{chernozhukov2018double}: each potential is trained on the other folds and evaluated on the held-out fold. Averaging the fold-specific estimates preserves unbiasedness under independent sampling and exact importance weights (when needed), while using all samples.

\paragraph{Potential selection with nested cross-fitting.}
Different potentials may achieve different variance reductions, motivating data-driven selection. However, selecting a potential on the same samples used for estimation can introduce bias \citep{varma2006bias}. We therefore use nested cross-fitting \citep{cawley2010over}: for each held-out fold, we select the potential through inner cross-validation on the remaining data, then apply it to the held-out fold. We detail this procedure in \Cref{alg:potential-selection}.
\subsubsection{Experiments \& Results}

\paragraph{Setup.}
We replicate the main experiment of \citet{dorman2026rare}, estimating the distribution of the Automated Readability Index (ARI; \Cref{eq:ari}), a sequence-level readability score, over texts generated by TinyStories-8M. Here $\testf=\mathrm{ARI}$. During sampling, we additionally retain the next-token distributions for subsequent potential fitting and evaluation. We consider three candidate potentials: the terminal potential, which marginalizes over the final token, and two learned potentials based on a simple logistic regression and a mixture model, respectively. We defer their details to \Cref{app:learned-potentials} (\nameref{app:learned-potentials}). We also include a control-variate variant of the terminal potential with a learned coefficient $\alpha$, giving four candidates, among which we select using nested cross-fitting with $K=5$ outer folds. Throughout, we refer to the original estimator of \citet{dorman2026rare} by the acronym MBAR. {MBAR uses estimated weights and is asymptotically unbiased under its regularity assumptions \citep{shirts2008statistically}; we do not claim exact finite-sample unbiasedness for this experiment.}

\paragraph{Learned potentials reduce variance by ${\sim}12\%$, equivalent to
${\sim}0.9\times10^{6}$ additional MBAR samples.}\mbox{}\par

\begin{wraptable}{r}{0.44\textwidth}
\vspace{-5pt}
\centering\small\setlength{\tabcolsep}{4pt}
\begin{tabular}{l c}
\toprule
Estimator & Var.\ ratio vs.\ MBAR [95\% CI] \\
\midrule
Fitted selection & 0.88 [0.86, 0.91] \\
Terminal & 0.92 [0.90, 0.94] \\
Terminal, $\alpha$ & 0.92 [0.90, 0.95] \\
Logistic & 0.90 [0.86, 0.91] \\
Mixture model & 0.87 [0.84, 0.90] \\
\bottomrule
\end{tabular}
\caption{variance relative to MBAR, median over the $64$ populated ARI bins; $95\%$ intervals from $1{,}000$ paired bootstrap replicas. Full version in \Cref{tab:rea-nested-full}.}

\label{tab:rea-nested}
\vspace{-8pt}
\end{wraptable}
\Cref{tab:rea-nested} summarizes the variance ratio of each potential estimator relative to MBAR; lower values indicate a more efficient estimator. The learned potentials give the largest reductions: the logistic and mixture-model potentials attain median variance ratios of $0.90$ and $0.87$, respectively, and nested selection attains $0.88$, closely matching the best individual candidate. The terminal potential gives a smaller but still consistent reduction, with a median ratio of $0.92$. It exploits cases where membership in a bin $D_j$ is determined only at the final token.

The experiment uses $N=6.68\times10^{6}$ generations, matching the original experiments. Assuming variance scales inversely with sample size, the median ratio of $0.88$ corresponds to an MBAR budget requiring $0.9\times10^{6}$ more generations than our potential estimator. We used only CPUs for potential estimation, as detailed in \Cref{app:events-results} (\nameref{app:events-results}).
\section{Conclusion}
\label{sec:conclusion}
We introduced a general framework for reducing the variance of expectation estimates under autoregressive language models using next-token distributions available as a by-product of sampling. By repurposing potentials from reinforcement learning, we construct an unbiased estimator whose variance depends on the chosen potential. Across several estimands and applications, we showed how practical potentials can improve statistical efficiency at comparable computational cost. More broadly, our results establish potential design as a general strategy for constructing lower-variance estimators under language models.

\subsection*{AI use statement}
In this work, we used generative AI tools to provide feedback on research methodology and experimental design, assist with the interpretation of experimental results, and help check and refine mathematical statements and derivations. Additionally, we used generative AI tools to assist with software development, and drafting and editing portions of the manuscript for clarity, structure, and readability. AI-assisted text and code were reviewed and verified by the authors.  We take responsibility for the final content of this work, including text, claims, and artifacts produced with the aid of generative AI.

\bibliography{references,anthology-1,anthology-2} 
\bibliographystyle{iclr2027_conference}

\appendix
\crefalias{section}{appendix}
\crefalias{subsection}{appendix}    
\crefalias{subsubsection}{appendix}

\clearpage
\etocsettocdepth.toc{subsubsection}
\begingroup
\etocsettocstyle{\subsection*{\centering Appendix Contents}}{}
\tableofcontents
\endgroup
\bigskip

\newpage
\section*{Limitations}
Our framework has several limitations. First, variance reduction depends on the choice of potential. Although the oracle potential yields zero variance under direct sampling, practical potentials only approximate it and are not guaranteed to improve over the baseline for every individual prompt. The control-variate formulation provides a safeguard against poorly chosen potentials, although estimating its coefficient or selecting among learned potentials requires cross-fitting to preserve unbiasedness. Thus, the strong performance of the potentials studied here should not be interpreted as implying that arbitrary potentials will yield similar gains.

Second, the computational advantage depends on the cost of evaluating the potential. In several of our applications, the required quantities are already available during sampling or can be computed relatively cheaply, whereas more elaborate potentials may require additional model evaluations or fitting. At the same time, this tradeoff suggests a natural direction for future work: designing potentials that achieve stronger variance reduction while remaining computationally inexpensive.

Third, our approach requires access to the model's next-token probability distribution along sampled prefixes. It therefore applies directly when logits or token probabilities are available, but not to black-box generation APIs that expose only sampled outputs. Finally, our empirical conclusions are necessarily limited to the estimands, models, and experimental configurations considered here; the magnitude of the variance reductions may differ in other settings.
\section{Related Work}
\label{app:related}

\paragraph{Importance sampling and sequential Monte Carlo for language models.}
Several methods reduce variance by modifying the sampling distribution to favor events of interest and correcting through importance weighting \citep{angell2026estimating,dorman2026rare}.
For example, \citet{boyd2022predictive} combine query-specific importance sampling with beam search to estimate probabilities of sequence events.
Sequential Monte Carlo \citep{delmoral2004feynman} and its twisted variants \citep{whiteley2014twisted} have also been applied to controlled generation \citep{lew2023sequential,zhao2024probabilistic,loula2025syntactic} and rare-event estimation \citep{liu2026adaptive}. In such methods, a \emph{twist} assigns a nonnegative weight to each prefix, guiding sampling and resampling toward prefixes judged more likely to produce the desired outcome.
For event-probability estimation, the optimal twist weights each prefix according to the probability that completing it under the original language model will satisfy the event.
It is therefore proportional to our oracle potential $\ppot^\star$, which represents the same conditional probability \citep{zhao2024probabilistic}.
Our framework uses prefix potentials to construct additive, zero-mean corrections that enable variance reduction after sampling, exploiting the next-token distribution created at generation.
We demonstrate its complementarity with importance sampling by augmenting the importance sampling estimator of \citet{dorman2026rare}, who sample from exponentially tilted sequence distributions using Markov chain Monte Carlo.

\paragraph{Conditional Monte Carlo, Rao--Blackwellization, and control variates.}
Replacing a finite-variance estimator by its conditional expectation cannot increase its variance, the principle underlying conditional Monte Carlo \citep{hammersley1956conditional} and Rao--Blackwellization \citep{robert2004monte,owen2013monte}.
For KL estimation under language models, \citet{amini2025better} replace each sampled token log-likelihood ratio by the exact KL divergence between the next-token distributions at the sampled prefix, and derive an analogous gradient estimator.
Although their KL estimator is unbiased, the variance-reduction guarantee in their Theorem~2 does not hold in general: averaging individual token contributions can alter their covariances and increase the variance of the sum.
Our framework recovers their KL estimator as a particular choice of potential and extends the construction to arbitrary test functions, explicitly distinguishing unbiasedness from variance reduction.
The difference between a potential estimator and any unbiased baseline defines a zero-mean control variate \citep{owen2013monte}.
This correction preserves unbiasedness for any fixed coefficient, while the population-optimal coefficient yields variance no greater than that of the baseline.
\paragraph{Potentials and reward shaping in reinforcement learning.}
Potential-based reward shaping in reinforcement learning uses changes in a function of the state to redistribute rewards while preserving optimal policies \citep{ng1999policy,wiewiora2003potential}.
Value functions, which predict future rewards, provide natural potentials: the resulting shaped rewards measure prediction errors and are used in generalized advantage estimation \citep{schulman2015high}.
Our construction uses the same telescoping principle, with text prefixes as states and the test function's value as the terminal reward.
We then average the potential increments over the next-token distribution to construct unbiased expectation estimates after sampling.
The ideal potential predicts the expected terminal reward given each prefix; its averaged increments vanish, yielding our zero-variance oracle (\Cref{prop:oracle-potential}).

\section{Additional Propositions and Proofs}
\label{app:proofs}

Throughout, $\yvar\sim\plm$ denotes a single sampled string, $|\yvar|$ its length excluding $\eos$, and $\yvar_{<t}$ its prefix of length $t-1$, so that $\yvar_{<1}=\varepsilon$ and $\yvar_{<|\yvar|+1}=\yvar$. All expectations are assumed finite.

\subsection{Importance Sampling and the Optimal Proposal}
\label{app:is-variance}

\begin{proposition}[Importance-sampling variance and optimal proposal]
\label{prop:is-variance}
Let $\qlm$ be a proposal {satisfying $\qlm(\str)>0$ whenever $\plm(\str)|\testf(\str)|>0$}, and let $\thIS$ be the importance-sampling estimator of \cref{eq:is-estimator}. Then $\E_{\qlm}[\thIS]=\estimand$ and
\begin{equation}
\label{eq:is-variance-general}
\Var_{\qlm}(\thIS)
=
\frac1M\left(
\sum_{\str\in\kleene{\alphabet}}
\frac{\plm(\str)^2\testf(\str)^2}{\qlm(\str)}
-
\estimand^2
\right).
\end{equation}
{Terms with $\qlm(\str)=0$ are read as zero.}
If $\E_{\plm}|\testf(\yvar)|>0$, the variance is minimized uniquely by
\begin{equation}
\label{eq:is-optimal-proposal}
\qlm^\star(\str)
=
\frac{\plm(\str)\,|\testf(\str)|}{\E_{\plm}|\testf(\yvar)|},
\qquad
\Var_{\qlm^\star}(\thIS)
=
\frac1M\Bigl(\bigl(\E_{\plm}|\testf(\yvar)|\bigr)^2-\estimand^2\Bigr),
\end{equation}
and the minimum is zero if and only if $\testf$ has constant sign on the support of $\plm$.
\end{proposition}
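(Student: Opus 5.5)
Because $\thIS$ is an average of $M$ i.i.d.\ copies of $f_q(\yvar)$ with $\yvar\sim\qlm$, both claims about $\thIS$ reduce to statements about one draw. We compute the first and second moments of $f_q$, then minimize the second moment over proposals. The minimization uses Cauchy--Schwarz, and its equality case gives uniqueness.

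\textbf{Unbiasedness and variance.} Write $S=\{\str:\qlm(\str)>0\}$. Then
\[
\E_{\qlm}[f_q(\yvar)]=\sum_{\str\in S}\qlm(\str)\frac{\plm(\str)\testf(\str)}{\qlm(\str)}=\sum_{\str\in S}\plm(\str)\testf(\str).
\]
The support assumption says that every $\str\notin S$ has $\plm(\str)\testf(\str)=0$. So the last sum equals $\estimand$, which is finite by assumption, and summing over the countable set is legitimate. The same computation gives $\E_{\qlm}[f_q(\yvar)^2]=\sum_{\str}\plm(\str)^2\testf(\str)^2/\qlm(\str)$, where the terms with $\qlm(\str)=0$ vanish because $f_q=0$ there. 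This matches the stated convention. By independence, $\Var(\thIS)=\frac1M\Var_{\qlm}(f_q)$, which yields \cref{eq:is-variance-general}. If the second moment is infinite, both sides are $+\infty$ and the identity still holds.

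\textbf{Lower bound and optimum.} Let $c=\E_{\plm}|\testf(\yvar)|>0$. Apply Cauchy--Schwarz over $S$, using $\sum_{S}\qlm\le 1$:
\[
c^2=\Bigl(\sum_{\str\in S}\frac{\plm(\str)|\testf(\str)|}{\sqrt{\qlm(\str)}}\sqrt{\qlm(\str)}\Bigr)^2\le\sum_{\str}\frac{\plm(\str)^2\testf(\str)^2}{\qlm(\str)}\cdot\sum_{S}\qlm(\str)\le\sum_{\str}\frac{\plm(\str)^2\testf(\str)^2}{\qlm(\str)}.
\]
If $c$ is infinite, the bound is trivial; I will assume $c<\infty$, which holds whenever $\E_{\plm}[\testf^2]<\infty$. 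Substituting $\qlm^\star$ makes each summand equal to $c\,\plm(\str)|\testf(\str)|$, so the sum is $c^2$ and the bound is attained. This gives \cref{eq:is-optimal-proposal}. Note that $\qlm^\star$ is a valid distribution and satisfies the support condition.

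\textbf{Uniqueness.} This is the step needing the most care. Equality in Cauchy--Schwarz requires $\plm|\testf|/\sqrt{\qlm}\propto\sqrt{\qlm}$ on $S$, that is, $\qlm\propto\plm|\testf|$ wherever $\plm|\testf|>0$. The second inequality is an equality only when $\qlm$ puts no mass on strings with $\plm|\testf|=0$. Strings where $\qlm>0$ but $\plm|\testf|=0$ contribute zero to the first factor, but they make $\sum_S\qlm$ strictly less than the mass on the relevant set. Together these force $\qlm=\qlm^\star$.

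\textbf{When the minimum is zero.} The minimum value is $c^2-\estimand^2=(\E|\testf|)^2-(\E\testf)^2$. This vanishes iff $|\E\testf|=\E|\testf|$, which holds iff $\testf\ge0$ $\plm$-a.s.\ or $\testf\le0$ $\plm$-a.s. That is the statement that $\testf$ has constant sign on the support of $\plm$, where zeros are allowed.
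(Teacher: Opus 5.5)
Your route is the same as the paper's. You reduce to one draw, compute the first two moments of $f_q$ using the support condition, and bound the second moment below by $c^2$ via Cauchy--Schwarz on $\bigl(p|f|/\sqrt{q}\bigr)$ and $\bigl(\sqrt{q}\bigr)$. You then settle the zero-variance case with $c\ge|\theta|$. Everything is correct except the uniqueness step, which you yourself flagged; it does not work as written.

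You defined $S=\{\str: q(\str)>0\}$, so $\sum_{S} q(\str)=1$ exactly. Your second inequality is therefore always an equality, and it cannot be what excludes mass on strings with $p(\str)|f(\str)|=0$. The sentence about such strings making $\sum_S q$ ``strictly less than the mass on the relevant set'' has no content for this $S$.

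The information you need was discarded one line earlier, when you restricted the Cauchy--Schwarz equality condition to the set where $p|f|>0$. Equality actually gives $p(\str)|f(\str)|=\lambda\,q(\str)$ for \emph{every} $\str\in S$ and some scalar $\lambda$. Suppose some $\str\in S$ had $p(\str)|f(\str)|=0$. Since $q(\str)>0$ there, this forces $\lambda=0$, so $p|f|\equiv0$ on $S$. By the support condition $p|f|=0$ off $S$ as well, which gives $c=0$, a contradiction. Hence $q$ vanishes off $\{p|f|>0\}$. Summing $p|f|=\lambda q$ over $S$ gives $\lambda=c$, so $q=q^\star$. This is how the paper closes the argument: proportionality on the support of $q$, then ``since $q^\star$ sums to one.''

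Alternatively, apply Cauchy--Schwarz over $R=\{p|f|>0\}\subseteq S$ instead of $S$. Then $\sum_R q\le1$ is a genuine inequality, strict exactly when $q$ puts mass off $R$, and your two-part uniqueness argument becomes correct as stated.
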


\begin{proof}
The summands of $\thIS$ are i.i.d., so it suffices to treat $M=1$. {Using the support assumption,}
\begin{align*}
\E_{\qlm}\!\left[\frac{\plm(\yvar)}{\qlm(\yvar)}\testf(\yvar)\right]
&=\sum_{\str:\,\qlm(\str)>0}\qlm(\str)\frac{\plm(\str)}{\qlm(\str)}\testf(\str)
=\sum_{\str\in\kleene{\alphabet}}\plm(\str)\testf(\str)
=\estimand,
\end{align*}
where the middle equality uses that {$\plm(\str)\testf(\str)=0$ whenever $\qlm(\str)=0$}. The same computation for the second moment gives $\E_{\qlm}[(\plm(\yvar)\testf(\yvar)/\qlm(\yvar))^2]=\sum_{\str}\plm(\str)^2\testf(\str)^2/\qlm(\str)$, which yields \cref{eq:is-variance-general}.

For the optimal proposal, write $c\defeq\E_{\plm}|\testf(\yvar)|=\sum_{\str}\plm(\str)|\testf(\str)|$. By the Cauchy--Schwarz inequality applied to the vectors $\bigl(\plm(\str)|\testf(\str)|/\sqrt{\qlm(\str)}\bigr)_{\str}$ and $\bigl(\sqrt{\qlm(\str)}\bigr)_{\str}$ over the support of $\qlm$,
\begin{align*}
c^2
=\Bigl(\sum_{\str}\frac{\plm(\str)|\testf(\str)|}{\sqrt{\qlm(\str)}}\sqrt{\qlm(\str)}\Bigr)^2
\le
\sum_{\str}\frac{\plm(\str)^2\testf(\str)^2}{\qlm(\str)}
\sum_{\str}\qlm(\str)
=
\sum_{\str}\frac{\plm(\str)^2\testf(\str)^2}{\qlm(\str)}.
\end{align*}
Hence $\Var_{\qlm}(\thIS)\ge(c^2-\estimand^2)/M$ for every admissible $\qlm$, with equality if and only if the two vectors are proportional, that is, $\qlm(\str)=\plm(\str)|\testf(\str)|/c=\qlm^\star(\str)$ on the support of $\qlm$; since $\qlm^\star$ sums to one, equality forces $\qlm=\qlm^\star$. Substituting $\qlm^\star$ into \cref{eq:is-variance-general} confirms the value in \cref{eq:is-optimal-proposal}. Finally, $c^2-\estimand^2=(c+|\estimand|)(c-|\estimand|)\ge0$ because $c=\E_{\plm}|\testf(\yvar)|\ge|\E_{\plm}\testf(\yvar)|=|\estimand|$, with equality if and only if $\testf(\yvar)$ has almost surely constant sign under $\plm$.
\end{proof}

\subsection{Unbiasedness of the Potential Estimator}
\label{app:unbiasedness}

\begin{proposition}[Unbiasedness and Variance]
\label{prop:unbiasedness}
Let $\ppot$ be any potential whose increments are absolutely summable in expectation, $\E_{\plm}\bigl[\sum_{t=1}^{|\yvar|+1}|\Delta\ppot(\strv_t\mid\yvar_{<t})|\bigr]<\infty$. Then the potential estimator of \cref{eq:potential-estimator} is unbiased: $\E_{\plm}[\hest(\ppot)]=\estimand$.
\end{proposition}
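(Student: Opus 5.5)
By linearity over the $M$ i.i.d.\ samples, it suffices to treat $M=1$. We must then show $\E_{\plm}\bigl[\sum_{t=1}^{|\yvar|+1}\mathbf B\Delta\ppot(\yvar_{<t})\bigr]=\estimand-\ppot(\varepsilon)$. The plan has three steps. First, we use the telescoping identity \cref{eq:additive-decomposition}, which holds pathwise for every finite string, to write $\testf(\yvar)-\ppot(\varepsilon)=\sum_{t=1}^{|\yvar|+1}\Delta\ppot(\strv_t\mid\yvar_{<t})$, with the convention $\strv_{|\yvar|+1}=\eos$. Taking expectations, the target becomes $\E_{\plm}\bigl[\sum_t \Delta\ppot(\strv_t\mid\yvar_{<t})\bigr]=\E_{\plm}\bigl[\sum_t \mathbf B\Delta\ppot(\yvar_{<t})\bigr]$. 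Second, we rewrite both sides as sums over prefixes rather than over time steps. Third, we justify the interchanges of sums and expectations.

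For the second step, note that the event ``position $t$ is reached with prefix $\prefix$'' is exactly $\{\yvar\succeq\prefix\}$ with $|\prefix|=t-1$. On this event, by \cref{eq:lm-chainrule} and the definition of the conditional prefix probabilities, the next symbol $\strv_t\in\alphabar$ has law $\pprob(\cdot\mid\prefix)$. Hence the left side can be written as a sum over prefixes and next symbols:
\[
\sum_{\prefix\in\kleene{\alphabet}}\ \sum_{a\in\alphabar}\pprob(\prefix)\,\pprob(a\mid\prefix)\,\Delta\ppot(a\mid\prefix).
\]
The inner sum over $a$ equals $\mathbf B\Delta\ppot(\prefix)$ by \cref{eq:bellman-averaged-increment}. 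Therefore the whole expression equals $\sum_{\prefix}\pprob(\prefix)\,\mathbf B\Delta\ppot(\prefix)$, which is precisely the expectation of the right side once the latter is expanded in the same way, since $\Pr[\yvar\succeq\prefix]=\pprob(\prefix)$. Prefixes with $\pprob(\prefix)=0$ contribute nothing, so the convention for $\pprob(\cdot\mid\prefix)$ at such prefixes plays no role. Equivalently, the quantities $D_t\defeq\Delta\ppot(\strv_t\mid\yvar_{<t})-\mathbf B\Delta\ppot(\yvar_{<t})$ (set to $0$ for $t>|\yvar|+1$) form martingale differences with respect to the prefix filtration, and their sum has mean zero.

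The main obstacle is the third step: the interchange of expectation with the random-length, possibly unbounded sum, which requires Fubini--Tonelli on the countable index set $\kleene{\alphabet}\times\alphabar$. The hypothesis gives $\sum_{\prefix,a}\pprob(\prefix)\pprob(a\mid\prefix)\,|\Delta\ppot(a\mid\prefix)|<\infty$. This absolute summability licenses the regrouping of the left side. It also controls the right side, because $|\mathbf B\Delta\ppot(\prefix)|\le\sum_a\pprob(a\mid\prefix)|\Delta\ppot(a\mid\prefix)|$, so $\E\bigl[\sum_t|\mathbf B\Delta\ppot(\yvar_{<t})|\bigr]$ is bounded by the same finite quantity. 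Finally, we implicitly use that $|\yvar|<\infty$ almost surely, since $\plm$ is a distribution on finite strings; this makes the telescoping identity valid almost surely. Together with finiteness of $\estimand$, this yields $\E_{\plm}[\hest(\ppot)]=\ppot(\varepsilon)+\estimand-\ppot(\varepsilon)=\estimand$.
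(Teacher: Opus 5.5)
Your proposal is correct and follows essentially the paper's argument: you reduce to $M=1$, invoke the telescoping identity, replace each increment by its conditional mean given the current prefix, and use absolute summability to exchange sum and expectation. The only differences are that you write the conditioning step as an explicit sum over prefixes $\prefix$ weighted by $\pprob(\prefix)$ where the paper conditions on $\yvar_{<t}$ on the event $\{t\le|\yvar|+1\}$, and that your bound $|\mathbf B\Delta\ppot(\prefix)|\le\sum_{a\in\alphabar}\pprob(a\mid\prefix)\,|\Delta\ppot(a\mid\prefix)|$ explicitly checks the integrability of the averaged sum, which the paper takes for granted.
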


\begin{proof}
It suffices to treat $M=1$.
Write $\horizon\defeq|\yvar|+1$ for the index of the $\eos$ step, so that the event $\{t\le\horizon\}$ is determined by $\yvar_{<t}$: it holds exactly when $\yvar_{<t}$ has not yet been terminated. For each $t$, conditioning on $\yvar_{<t}$ and using that $\strv_t\sim\pprob(\cdot\mid\yvar_{<t})$ on this event,
\[
\E_{\plm}\bigl[\1\{t\le\horizon\}\,\Delta\ppot(\strv_t\mid\yvar_{<t})\bigr]
=
\E_{\plm}\bigl[\1\{t\le\horizon\}\,\E[\Delta\ppot(\strv_t\mid\yvar_{<t})\mid\yvar_{<t}]\bigr]
=
\E_{\plm}\bigl[\1\{t\le\horizon\}\,\mathbf B\Delta\ppot(\yvar_{<t})\bigr],
\]
by \cref{eq:bellman-averaged-increment}. Absolute summability lets us sum over $t$ inside the expectation on both sides, giving
\[
\E_{\plm}\Bigl[\sum_{t=1}^{\horizon}\Delta\ppot(\strv_t\mid\yvar_{<t})\Bigr]
=
\E_{\plm}\Bigl[\sum_{t=1}^{\horizon}\mathbf B\Delta\ppot(\yvar_{<t})\Bigr]
=
\E_{\plm}[\hest(\ppot)]-\ppot(\varepsilon).
\]
The left-hand side equals $\E_{\plm}[\testf(\yvar)]-\ppot(\varepsilon)=\estimand-\ppot(\varepsilon)$ by the telescoping identity \cref{eq:additive-decomposition}, which proves the claim.

\end{proof}

\subsection{Proof of \Cref{prop:oracle-potential}}
\label{app:oracle-potential}

\begin{proof}[Proof of \Cref{prop:oracle-potential}]
\emph{Oracle potential.}
For a prefix $\prefix$ with $\pprob(\prefix)>0$, the oracle potential is the continuation value
\[
\ppot^\star(\prefix)
=
\E_{\plm}[\testf(\yvar)\mid\yvar\succeq\prefix]
=
\frac{1}{\pprob(\prefix)}\sum_{\cont\in\kleene{\alphabet}}\plm(\prefix\cont)\,\testf(\prefix\cont),
\]
which is finite whenever $\E_{\plm}|\testf(\yvar)|<\infty$. In particular $\ppot^\star(\varepsilon)=\estimand$. Splitting the continuation $\cont$ according to its first symbol gives the one-step recursion
\begin{equation}
\label{eq:continuation-recursion}
\ppot^\star(\prefix)
=
\sum_{a\in\alphabet}\pprob(a\mid\prefix)\,\ppot^\star(\prefix a)
+
\pprob(\eos\mid\prefix)\,\testf(\prefix),
\end{equation}
where terms with $\pprob(a\mid\prefix)=0$ are read as zero. The right-hand side is precisely $\mathbf B\ppot^\star(\prefix)$, so the expected increment of \cref{eq:bellman-averaged-increment} vanishes at every prefix of positive probability:
\[
\mathbf B\Delta\ppot^\star(\prefix)
=
\mathbf B\ppot^\star(\prefix)-\ppot^\star(\prefix)
=0.
\]
Every prefix $\yvar^{(m)}_{<t}$ visited by a sampled string has positive probability, so all inner sums in \cref{eq:potential-estimator} vanish and
\[
\hest(\ppot^\star)
=
\ppot^\star(\varepsilon)
+
\frac1M\sum_{m=1}^{M}\sum_{t=1}^{|\yvar^{(m)}|+1}0
=
\estimand
\]
pathwise. A constant has zero variance, which proves the first claim.

\emph{Variance comparison.}
Writing $\hest(\ppot)=\hest_*-\bigl(\hest_*-\hest(\ppot)\bigr)$ and expanding,
\[
\Var_{\plm}\bigl(\hest(\ppot)\bigr)
=
\Var_{\plm}(\hest_*)
+
\Var_{\plm}\bigl(\hest_*-\hest(\ppot)\bigr)
-
2\Cov_{\plm}\bigl(\hest_*,\hest_*-\hest(\ppot)\bigr).
\]
Therefore $\Var_{\plm}(\hest(\ppot))\le\Var_{\plm}(\hest_*)$ holds if and only if
$\Var_{\plm}(\hest_*-\hest(\ppot))\le2\Cov_{\plm}(\hest_*,\hest_*-\hest(\ppot))$, which is the stated condition.
\end{proof}

\subsection{Proof of \Cref{prop:optimal-alpha}}
\label{app:optimal-alpha}

\begin{proof}[Proof of \Cref{prop:optimal-alpha}]
\emph{Unbiasedness.}
By linearity,
\[
\E_{\plm}[\hest(\ppot,\cvcoef)]=\E_{\plm}[\hest_*]-\cvcoef\bigl(\E_{\plm}[\hest_*]-\E_{\plm}[\hest(\ppot)]\bigr)=\estimand-\cvcoef(\estimand-\estimand)=\estimand,
\]
using that $\hest_*$ is unbiased by assumption and $\hest(\ppot)$ is unbiased by \Cref{prop:unbiasedness}.

\emph{Optimal coefficient.}
Write $D\defeq\hest_*-\hest(\ppot)$. Since both estimators have finite second moments,
\begin{equation}
\label{eq:quad-cv}
\Var_{\plm}\bigl(\hest(\ppot,\cvcoef)\bigr)
=
\Var_{\plm}(\hest_*-\cvcoef D)
=
\Var_{\plm}(\hest_*)
-
2\cvcoef\,\Cov_{\plm}(\hest_*,D)
+
\cvcoef^2\,\Var_{\plm}(D).
\end{equation}
When $\Var_{\plm}(D)>0$, this is a strictly convex quadratic in $\cvcoef$, minimized where its derivative vanishes, $\cvcoef\,\Var_{\plm}(D)=\Cov_{\plm}(\hest_*,D)$, that is, at $\astar=\Cov_{\plm}(\hest_*,D)/\Var_{\plm}(D)$ as in \cref{eq:optimal-alpha}. Substituting $\astar$ into \cref{eq:quad-cv},
\[
\Var_{\plm}\bigl(\hest(\ppot,\astar)\bigr)
=
\Var_{\plm}(\hest_*)
-
\frac{\Cov_{\plm}(\hest_*,D)^2}{\Var_{\plm}(D)}
\le
\Var_{\plm}(\hest_*).
\]
When $\Var_{\plm}(D)=0$, $D$ is almost surely equal to its mean, which is zero by unbiasedness, so $\hest(\ppot,\cvcoef)=\hest_*$ almost surely for every $\cvcoef$.
\end{proof}

\subsection{Potentials under a Proposal}
\label{app:importance-sampling}

We now drop the assumption $\qlm=\plm$ of \Cref{sec:sequential-decompositions}. Let $\qlm$ be a proposal with respect to which $\plm$ is absolutely continuous, {so that $\qlm(\str)=0$ implies $\plm(\str)=0$}, and let $\yvar^{(1)},\ldots,\yvar^{(M)}\iid\qlm$. The potential construction remains unchanged: given a potential $\ppot$, we first form the corresponding single-sample potential estimator under the target distribution $\plm$, and then apply the usual importance weight to account for sampling from $\qlm$.

\begin{definition}[Importance-weighted potential estimator]
\label{def:is-potential-estimator-mod}
Let $\yvar^{(1)},\ldots,\yvar^{(M)}\iid\qlm$. Given a potential $\ppot$, define
\begin{align}
\label{eq:is-potential-estimator-mod}
\thIS(\ppot)
\defeq
\frac{1}{M}
\sum_{m=1}^{M}
\frac{\plm(\yvar^{(m)})}{\qlm(\yvar^{(m)})}
\left[
\ppot(\varepsilon)
+
\sum_{t=1}^{|\yvar^{(m)}|+1}
\mathbf B\Delta\ppot(\yvar^{(m)}_{<t})
\right].
\end{align}
\end{definition}

 The next proposition is the counterpart of \Cref{prop:oracle-potential} under a proposal.

\begin{proposition}[Unbiasedness and variance under a proposal]
\label{prop:is-potential-variance-mod}
Let $\ppot$ satisfy the summability condition of
\Cref{prop:unbiasedness}, and assume that $\thIS(\ppot)$ has a finite
second moment. Assume that $\thIS(\ppot)$ has a finite second moment. Then
$\E_{\qlm}[\thIS(\ppot)]=\estimand$. Moreover, for the oracle potential,
\[
\Var_{\qlm}\bigl(\thIS(\ppot^\star)\bigr)
=
\frac{\estimand^2}{M}
\left(
\sum_{\str\in\kleene{\alphabet}}
\frac{\plm(\str)^2}{\qlm(\str)}
-1
\right).
\]
In particular, if $\estimand\neq0$, the oracle potential yields zero variance if and only if $\qlm=\plm$.
\end{proposition}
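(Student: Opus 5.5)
Unbiasedness is immediate: each summand of $\thIS(\ppot)$ has the form $\frac{\plm(\yvar)}{\qlm(\yvar)}g(\yvar)$ with $g(\str)\defeq\ppot(\varepsilon)+\sum_{t=1}^{|\str|+1}\mathbf B\Delta\ppot(\str_{<t})$ a deterministic function of the string. Since $\plm$ is absolutely continuous with respect to $\qlm$, we have
\[
\E_{\qlm}\!\left[\frac{\plm(\yvar)}{\qlm(\yvar)}g(\yvar)\right]=\sum_{\str:\,\qlm(\str)>0}\plm(\str)g(\str)=\E_{\plm}[g(\yvar)],
\]
and this is exactly the same change-of-measure computation as in the proof of \Cref{prop:is-variance}, with $\testf$ replaced by $g$. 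The right-hand side is $\E_{\plm}[\hest(\ppot)]$ for $M=1$, which equals $\estimand$ by \Cref{prop:unbiasedness} under the stated summability condition. Because the $M$ summands are i.i.d., unbiasedness holds for every $M$.

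For the variance under the oracle potential, I reuse the pathwise argument from the proof of \Cref{prop:oracle-potential}: every prefix of a string in the support of $\plm$ has positive $\plm$-prefix probability, so $\mathbf B\Delta\ppot^\star$ vanishes there by the recursion \cref{eq:continuation-recursion}. Hence $g(\str)=\ppot^\star(\varepsilon)=\estimand$ whenever $\plm(\str)>0$.

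The one delicate point is a string sampled from $\qlm$ with $\plm(\str)=0$, where the oracle potential may be undefined. There the weight $\plm(\str)/\qlm(\str)$ is zero, so the summand is zero if we use the convention that weight zero kills the bracket, or assign $\ppot^\star$ any finite value. Either way, every summand equals $\estimand\,\plm(\yvar)/\qlm(\yvar)$ almost surely under $\qlm$, and $\thIS(\ppot^\star)$ is $\estimand$ times the average importance weight.

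The variance then follows directly. The weight $w=\plm(\yvar)/\qlm(\yvar)$ has $\E_{\qlm}[w]=1$ and $\E_{\qlm}[w^2]=\sum_{\str}\plm(\str)^2/\qlm(\str)$, with terms where $\qlm(\str)=0$ read as zero. This gives
\[
\Var_{\qlm}\bigl(\thIS(\ppot^\star)\bigr)=\frac{\estimand^2}{M}\left(\sum_{\str}\frac{\plm(\str)^2}{\qlm(\str)}-1\right).
\]

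For the final claim, take $\estimand\neq0$. The variance is zero iff $\sum_{\str}\plm(\str)^2/\qlm(\str)=1$. By Cauchy--Schwarz, as in \Cref{prop:is-variance} with $|\testf|\equiv1$,
\[
1=\Bigl(\sum_{\str}\plm(\str)\Bigr)^2\le\sum_{\str}\frac{\plm(\str)^2}{\qlm(\str)}\sum_{\str}\qlm(\str),
\]
where the sums run over the support of $\qlm$. Equality requires $\qlm\propto\plm$ on the support of $\qlm$. Since both are probability distributions, this forces $\qlm=\plm$; in particular any $\qlm$-mass outside $\operatorname{supp}\plm$ would break proportionality. Equivalently, the quantity $\sum_{\str}\plm(\str)^2/\qlm(\str)-1$ is the $\chi^2$ divergence, which vanishes only when the two distributions coincide.

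I expect the main obstacle to be only the handling of the support conventions, namely the oracle potential off the support of $\plm$. The rest is routine.
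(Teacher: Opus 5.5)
Your proposal is correct and follows the paper's proof essentially step for step. It uses a change of measure plus \Cref{prop:unbiasedness} for unbiasedness, the pathwise identity from \Cref{prop:oracle-potential} to reduce $\thIS(\ppot^\star)$ to $\estimand$ times the average importance weight, and the bound $\sum_{\str}\plm(\str)^2/\qlm(\str)\ge 1$ with equality iff $\qlm=\plm$; your explicit handling of strings with $\qlm(\str)>0=\plm(\str)$ and your Cauchy--Schwarz justification of that last inequality fill in details the paper leaves implicit.
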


\begin{proof}
\emph{Unbiasedness.} It suffices to treat $M=1$. By a change of measure,
\begin{align*}
\E_{\qlm}[\thIS(\ppot)]
&=
\E_{\qlm}\left[
\frac{\plm(\yvar)}{\qlm(\yvar)}
\left(
\ppot(\varepsilon)
+
\sum_{t=1}^{|\yvar|+1}
\mathbf B\Delta\ppot(\yvar_{<t})
\right)
\right]
\\
&=
\E_{\plm}\left[
\ppot(\varepsilon)
+
\sum_{t=1}^{|\yvar|+1}
\mathbf B\Delta\ppot(\yvar_{<t})
\right]
=
\estimand,
\end{align*}
where the last equality follows from \Cref{prop:unbiasedness}.

\emph{Variance.} By \Cref{prop:oracle-potential},
\[
\ppot^\star(\varepsilon)
+
\sum_{t=1}^{|\yvar|+1}
\mathbf B\Delta\ppot^\star(\yvar_{<t})
=
\estimand
\]
almost surely under $\plm$. Hence,
\begin{align*}
M\Var_{\qlm}\bigl(\thIS(\ppot^\star)\bigr)
&=
\Var_{\qlm}\left(
\frac{\plm(\yvar)}{\qlm(\yvar)}\estimand
\right)
\\
&=
\estimand^2
\left(
\E_{\qlm}\left[
\frac{\plm(\yvar)^2}{\qlm(\yvar)^2}
\right]
-1
\right)
\\
&=
\estimand^2
\left(
\sum_{\str\in\kleene{\alphabet}}
\frac{\plm(\str)^2}{\qlm(\str)}
-1
\right).
\end{align*}
Finally,
\[
\sum_{\str\in\kleene{\alphabet}}
\frac{\plm(\str)^2}{\qlm(\str)}
\geq
\left(\sum_{\str\in\kleene{\alphabet}}\plm(\str)\right)^2
=1,
\]
with equality if and only if $\qlm=\plm$. Thus, when $\estimand\neq0$, the variance is zero if and only if $\qlm=\plm$.
\end{proof}
The control-variate estimator of \Cref{def:potential-cv} is formed with the baseline $\hest_*=\thIS$ and $\hest(\ppot)$ replaced by $\thIS(\ppot)$,
\begin{equation}
\label{eq:is-potential-cv}
\thIS(\ppot,\cvcoef)
\defeq
\thIS
-
\cvcoef\bigl(\thIS-\thIS(\ppot)\bigr),
\qquad
\cvcoef\in\R.
\end{equation}
The proof of \Cref{prop:optimal-alpha} uses only the unbiasedness and finite second moments of the two estimators, so it applies verbatim: $\thIS(\ppot,\cvcoef)$ is unbiased for every $\cvcoef$, and at $\astar$ its variance is at most that of $\thIS$.

\section{Additional Analysis and Experimental Details}
\label{app:experiments}

This appendix collects the setup behind the experiments of \Cref{sec:practice}: the models and prompt banks, the target-construction protocols, the generation configurations, and the supporting figures and tables. One subsection per application, in the order of \Cref{sec:practice}.

\subsection{Analysis of Watermark Detection}
\label{app:exp-watermark-analysis}
\label{app:exp-watermark}

A language model watermark embeds a statistical signal in generated text while preserving the model's intended task. Watermark detection then aims to reliably assess whether a text of a given length was generated by a watermarked model.  We use the scheme of \citet{kirchenbauer2023watermark}: at each generation step, a pseudorandom \defn{green list} $G(\xprefix)\subseteq\alphabet$ is determined from the prompt and generated prefix $\xprefix$, and the logits of tokens in this set are increased by a bias $\delta$. Let $\gamma$ denote the fraction of the vocabulary assigned to the green list, $m$ the generation length, and $\sigma\defeq\sqrt{m\gamma(1-\gamma)}$. Detection then counts the number of generated tokens that fall in their respective green lists and standardizes this count,

\begin{equation}
\label{eq:watermark-score}
\frac{|\yvar|_G-\gamma m}{\sigma},
\qquad
|\yvar|_G
\defeq
\#\{t:\strv_t\in G(\yvar_{<t})\},
\end{equation}
declaring the string watermarked when this exceeds a prescribed threshold
$\tau$, equivalently when $|\yvar|_G\ge\gamma m+\tau\sigma$.

\subsubsection{Estimands and potential definitions}
\begin{table}[!htb]
\refstepcounter{table}\label{tab:watermark-score}%
\refstepcounter{table}\label{tab:watermark-detection}%
\dtcard{@{}V@{\hspace{5pt}}l@{\hspace{7pt}}V@{\hspace{7pt}}l@{\hspace{5pt}}V@{}}{%
\toprule
\dtcardhead{Table~\ref{tab:watermark-score}: Expected watermark score}
& \dtcardhead{Table~\ref{tab:watermark-detection}: Probability of watermark detection} \\
\midrule
$\testf(\yvar)=\bigl(|\yvar|_G-\gamma m\bigr)/\sigma$
& $\testf(\yvar)=\1\{|\yvar|_G\ge\gamma m+\tau\sigma\}$\\
\midrule
$|\xprefix\zprefix|_G=|\xprefix|_G+|\zprefix|_G$
& not additive in the prefix\\
\midrule
$\ppot^\star(\xprefix)=\bigl(|\xprefix|_G+\E_{\plm}[|\zvar|_G\mid\zvar\succeq\xprefix]-\gamma m\bigr)/\sigma$
& $\ppot^\star(\xprefix)=\Pr_{\plm}\bigl[|\yvar|_G\ge\gamma m+\tau\sigma\mid\yvar\succeq\xprefix\bigr]$\\
\midrule
$\ppot(\xprefix)=\bigl(|\xprefix|_G-\gamma m\bigr)/\sigma$
& $\ppot(\xprefix,q)=\Pr\bigl[\mathrm{Bin}(m-|\xprefix|,q)\ge k(\xprefix)\bigr]$\\
\midrule
$\Delta\ppot(a\mid\xprefix)=\1\{a\in G(\xprefix)\}/\sigma$
& $\Delta\ppot(a\mid\xprefix,q)=\ppot(\xprefix a,q)-\ppot(\xprefix,q)$\\
\midrule
$\Delta\ppot(\eos\mid\xprefix)=0$
& $\Delta\ppot(\eos\mid\xprefix,q)=0$\\
\bottomrule}
\end{table}
At a fixed prompt and generation length, two expectations characterize the
detector. The \defn{expected score}
$\E_{\plm}\bigl[(|\yvar|_G-\gamma m)/\sigma\bigr]$ measures how strongly the
watermark registers on average; the \defn{detection probability}
$\Pr_{\plm}[|\yvar|_G\ge\gamma m+\tau\sigma]$ is the detector's true-positive
rate, and is the quantity a deployment actually cares about. The two behave
differently under our framework. The score has an additive structure over prefixes
(\Cref{tab:watermark-score}) so the value of the function at the observed prefix is
itself a potential, exactly as for the losses of \Cref{sec:app-losses}. The
detection indicator is not additive in the prefix, so it admits no such
decomposition and must instead be approached by approximating the oracle
potential, as in the event analysis of \Cref{sec:app-events}. Here that oracle
potential is the conditional probability that the detector will eventually fire.

\paragraph{Detection probability: analytical approximation of the oracle potential.}
For the detection probability a natural analytical approximation of the oracle
potential is available. Write
$k(\xprefix)=\gamma m+\tau\sigma-|\xprefix|_G$ for the number of green tokens
still needed at the prefix $\xprefix$. If the remaining $m-|\xprefix|$ tokens
were green independently with a common rate $q$, the detector would fire with
probability $\Pr[\mathrm{Bin}(m-|\xprefix|,q)\ge k(\xprefix)]$, which is the
potential of \Cref{tab:watermark-detection}. It depends on the single parameter
$q$, which we fit globally for each length $m$ by cross-fitting on $10$ folds of the continuations: the $q$ applied to one fold is the average green-list mass of the other nine, pooled over prompts. Computing this potential requires no additional model evaluations, as it uses only the green-list mass $\sum_{a\in G(\xprefix)} \pprob(a\mid\xprefix)$ from the already-computed next-token distribution.

\subsubsection{Experiments and results}
\begin{wraptable}[14]{r}{0.46\linewidth}
\vspace{-1.2\baselineskip}
\centering\scriptsize\setlength{\tabcolsep}{3.5pt}
\renewcommand{\arraystretch}{0.95}
\begin{tabular}{c c c c c}
\toprule
$\delta$ & $m$ & median & $[p_{10},\,p_{90}]$ & worst prompt \\
\midrule
$1$ & $32$ & $0.471$ & $[0.413,\,0.564]$ & $0.907$ \\
 & $64$ & $0.513$ & $[0.460,\,0.597]$ & $0.894$ \\
 & $128$ & $0.561$ & $[0.508,\,0.677]$ & $0.931$ \\
 & $200$ & $0.612$ & $[0.544,\,0.748]$ & $0.965$ \\
\midrule
$2$ & $32$ & $0.538$ & $[0.471,\,0.631]$ & $0.836$ \\
 & $64$ & $0.584$ & $[0.520,\,0.671]$ & $0.876$ \\
 & $128$ & $0.644$ & $[0.583,\,0.748]$ & $0.925$ \\
 & $200$ & $0.699$ & $[0.629,\,0.818]$ & $0.962$ \\
\midrule
$5$ & $32$ & $0.722$ & $[0.608,\,0.807]$ & $0.922$ \\
 & $64$ & $0.743$ & $[0.665,\,0.817]$ & $0.960$ \\
 & $128$ & $0.790$ & $[0.725,\,0.867]$ & $0.993$ \\
 & $200$ & $0.837$ & $[0.776,\,0.910]$ & $0.996$ \\
\bottomrule
\end{tabular}
\caption{Per-prompt variance of $\hest(\ppot)$ against Monte Carlo, $512$ prompts.}
\label{tab:wm-score-spread}
\end{wraptable}
\paragraph{Setup.}
We use the same setup of \citet{kirchenbauer2023watermark}, with green-list fraction $\gamma=0.5$ and logit biases $\delta\in\{1,2, 5\}$. For each configuration, we generate $4{,}000$ continuations per prompt from OPT-1.3B for $512$ prompts from C4 RealNewsLike, using temperature $0.7$. We evaluate continuation lengths $m\in\{32,64,128,200\}$ and detector thresholds $z_0\in\{3,4,5\}$. 

\begin{figure}[t]
\centering
\begin{minipage}[t]{0.48\textwidth}
\centering
{\footnotesize\textbf{(a)} binomial potential and green-count paths}\\[3pt]
\includegraphics[width=\linewidth]{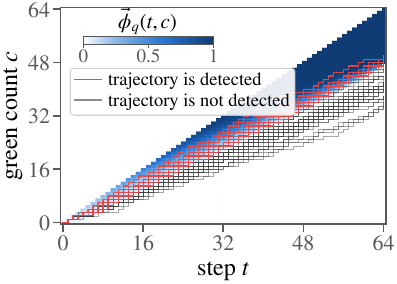}
\end{minipage}\hfill
\begin{minipage}[t]{0.48\textwidth}
\centering
{\footnotesize\textbf{(b)} variance ratio across regimes}\\[3pt]
\includegraphics[width=\linewidth]{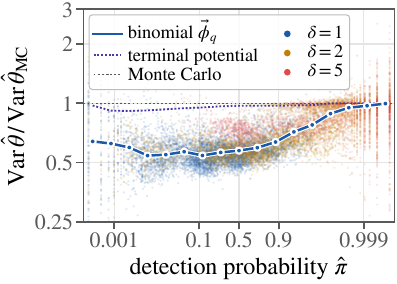}
\end{minipage}
\caption{\footnotesize Panel (a) shows how sampled trajectories update the green-token count over 64 generation steps. Red trajectories are those whose final green-token count exceeds the detection threshold, while black trajectories remain below it. The background encodes the potential’s assigned probability of eventual detection at each step and green-count state, with white indicating probability zero. Red trajectories tend to pass through higher-probability regions, whereas black trajectories concentrate in low- or zero-probability regions. Panel (b) shows the variance reduction at different detection probabilities}
\label{fig:watermark}
\end{figure}

For each of the two estimands, we compare the associated potentials against Monte Carlo. For the watermark-detection probability, we also include the baseline terminal potential, i.e., the corresponding observed-prefix potential for non-additive functions: zero at every nonterminal prefix and equal to the function value at the terminal prefix.

\paragraph{Detection probability: the binomial potential removes about $44\%$ of the Monte Carlo variance when detection is uncertain, but little once it saturates.}
\Cref{fig:watermark}(b) shows the variance ratio between the potential estimator and MC for each prompt and configuration $(\delta,m,z_0)$. Across the $14{,}366$ pairs for which detection is not deterministic, the median ratio is $0.64$, corresponding to a $36\%$ reduction in variance. Since variance scales inversely with the number of samples, this is equivalent to reaching the precision of $4{,}000$ Monte Carlo samples with about $2{,}560$ samples, without additional model evaluations. The gain is largest when detection is uncertain: for $0.001\le\hat\pi<0.5$, the median ratio is $0.56$, corresponding to a $44\%$ reduction. It decreases as detection saturates, reaching a median ratio of $0.97$ for $\hat\pi\ge0.99$. Overall, $92.7\%$ of pairs improve over Monte Carlo, with similar behavior across $\delta$, $m$, and $z_0$. The terminal potential, by contrast, remains close to Monte Carlo, with a median ratio of $0.973$.

\Cref{fig:watermark}(a) gives intuition for the variance reduction. From early in generation, trajectories that are eventually detected overlap with regions of positive potential, whereas undetected trajectories lie mostly in the white region where the potential is zero. Thus, the potential already contains information about the final detection outcome long before the trajectory ends.

\paragraph{Expected score: every prompt improves, by $16$--$53\%$ depending on $\delta$ and $m$.}
\Cref{tab:wm-score-spread} gives the per-prompt variance relative to Monte Carlo over the $512$ prompts. The
median ratio rises from $0.471$ at $\delta=1,m=32$ to $0.837$ at $\delta=5,m=200$: the marginalization helps most when
the watermark is weak and the continuation is short, the regime in which a single green indicator carries the most
conditional variance. The spread over prompts is narrow, the $p_{10}$--$p_{90}$ width never exceeding $0.21$, and every
one of the $512$ prompts lies below Monte Carlo in all twelve cells, the worst single prompt reaching $0.996$. We also cross-fit a control-variate
coefficient on top of but observe that it buys almost nothing here: across the twelve cells the  variance against
$\hest(\ppot)$ stays within $0.991$--$0.998$ and the median prompt within $0.994$--$1.000$.

\subsection{Losses and Divergences: Details and Further Results}
\label{app:exp-details}

\subsubsection{Potential definitions}
\label{app:proper-quantities}
Below we give the potential tables for the losses and divergences of \Cref{sec:app-losses}; the corresponding gradients are in \Cref{app:exp-gradients}. {Continuation sums include termination, with $\yvar_{|\yvar|+1}=\zprefix_{|\zprefix|+1}=\eos$.}
\medskip
\refstepcounter{table}\label{tab:kl}%
\dtcard{@{}V@{\hspace{5pt}}l@{\hspace{5pt}}V l@{}}{%
\toprule
\dtcardhead{Table~\thetable: KL divergence} & \\
\midrule
$\testf(\yvar)=\log\dfrac{\plm(\yvar)}{\qlm(\yvar)}$
& \rowtag{Test function}\\
\midrule

{$\ppot^\star(\xprefix)=\log\frac{\pprob(\xprefix)}{\qpprob(\xprefix)}+\E_{\plm}\!\left[\log\frac{\plm(\yvar)\,\qpprob(\xprefix)}{\qlm(\yvar)\,\pprob(\xprefix)}\;\middle|\;\yvar\succeq\xprefix\right]$}
& \rowtag{oracle potential}\\
\midrule
$\ppot(\xprefix)=\log\dfrac{\pprob(\xprefix)}{\qpprob(\xprefix)}$
& \rowtag{Potential}\\
\midrule
$\Delta\ppot(a\mid\xprefix)=\log\dfrac{\pprob(a\mid\xprefix)}{\qpprob(a\mid\xprefix)}$
& \rowtag{Increment ($a\in\alphabet$)}\\
\midrule
$\Delta\ppot(\eos\mid\xprefix)=\log\dfrac{\pprob(\eos\mid\xprefix)}{\qpprob(\eos\mid\xprefix)}$
& \rowtag{\eos{} increment}\\
\midrule
$\mathbf B\Delta\ppot(\xprefix)=\KL\bigl(\pprob(\cdot\mid\xprefix)\,\|\,\qpprob(\cdot\mid\xprefix)\bigr)$
& \rowtag{Expected increment}\\
\bottomrule}

\medskip
\refstepcounter{table}\label{tab:brier}%
\dtcard{@{}V@{\hspace{5pt}}l@{\hspace{5pt}}V@{}}{%
\toprule
\dtcardhead{Table~\thetable: Brier loss} \\
\midrule
$\testf(\yvar)=\sum_{t}\sum_{b\in\alphabar}\bigl(\pprob(b\mid\yvar_{<t})-\1\{b=\strv_t\}\bigr)^2$\\
\midrule
$\Delta\ppot(a\mid\xprefix)=\sum_{b\in\alphabar}\bigl(\pprob(b\mid\xprefix)-\1\{b=a\}\bigr)^2=1-2\,\pprob(a\mid\xprefix)+\sum_{b\in\alphabar}\pprob(b\mid\xprefix)^2$\\
\midrule
{$\testf(\xprefix\zprefix)=\ppot(\xprefix)+\sum_{t=1}^{|\zprefix|+1}\Delta\ppot(\zprefix_t\mid\xprefix\zprefix_{<t})$}\\
\midrule
{$\ppot^\star(\xprefix)=\ppot(\xprefix)+\E_{\plm}\!\left[\sum_{t=|\xprefix|+1}^{|\yvar|+1}\Delta\ppot(\yvar_t\mid\yvar_{<t})\;\middle|\;\yvar\succeq\xprefix\right]$}\\
\midrule
$\ppot(\xprefix)=\sum_{t\le|\xprefix|}\Delta\ppot(\xprefix_t\mid\xprefix_{<t})$\\
\midrule
$\Delta\ppot(\eos\mid\xprefix)=1-2\,\pprob(\eos\mid\xprefix)+\sum_{b\in\alphabar}\pprob(b\mid\xprefix)^2$\\
\midrule
$\mathbf B\Delta\ppot(\xprefix)=1-\sum_{b\in\alphabar}\pprob(b\mid\xprefix)^2$\\
\bottomrule}

\medskip
\refstepcounter{table}\label{tab:power}%
\dtcard{@{}V@{\hspace{5pt}}l@{\hspace{5pt}}V@{}}{%
\toprule
\dtcardhead{Table~\thetable: Power-score regret, exponent $\beta$} \\
\midrule
$\testf(\yvar)=\sum_{t}\Bigl[\beta\bigl(\pprob(\strv_t\mid\yvar_{<t})^{\beta-1}-\qpprob(\strv_t\mid\yvar_{<t})^{\beta-1}\bigr)-(\beta-1)\sum_{b\in\alphabar}\bigl(\pprob(b\mid\yvar_{<t})^{\beta}-\qpprob(b\mid\yvar_{<t})^{\beta}\bigr)\Bigr]$\\
\midrule
$\Delta\ppot(a\mid\xprefix)=\beta\bigl(\pprob(a\mid\xprefix)^{\beta-1}-\qpprob(a\mid\xprefix)^{\beta-1}\bigr)-(\beta-1)\sum_{b\in\alphabar}\bigl(\pprob(b\mid\xprefix)^{\beta}-\qpprob(b\mid\xprefix)^{\beta}\bigr)$\\
\midrule
{$\testf(\xprefix\zprefix)=\ppot(\xprefix)+\sum_{t=1}^{|\zprefix|+1}\Delta\ppot(\zprefix_t\mid\xprefix\zprefix_{<t})$}\\
\midrule
{$\ppot^\star(\xprefix)=\ppot(\xprefix)+\E_{\plm}\!\left[\sum_{t=|\xprefix|+1}^{|\yvar|+1}\Delta\ppot(\yvar_t\mid\yvar_{<t})\;\middle|\;\yvar\succeq\xprefix\right]$}\\
\midrule
$\ppot(\xprefix)=\sum_{t\le|\xprefix|}\Delta\ppot(\xprefix_t\mid\xprefix_{<t})$\\
\midrule
$\Delta\ppot(\eos\mid\xprefix)=\beta\bigl(\pprob(\eos\mid\xprefix)^{\beta-1}-\qpprob(\eos\mid\xprefix)^{\beta-1}\bigr)-(\beta-1)\sum_{b\in\alphabar}\bigl(\pprob(b\mid\xprefix)^{\beta}-\qpprob(b\mid\xprefix)^{\beta}\bigr)$\\
\midrule
$\mathbf B\Delta\ppot(\xprefix)=\sum_{b\in\alphabar}\Bigl[\pprob(b\mid\xprefix)^{\beta}-\beta\,\pprob(b\mid\xprefix)\,\qpprob(b\mid\xprefix)^{\beta-1}+(\beta-1)\,\qpprob(b\mid\xprefix)^{\beta}\Bigr]$\\
\bottomrule}

\medskip
\noindent Entropy is the same construction with $\Delta\ppot(a\mid\xprefix)=-\log\pprob(a\mid\xprefix)$, shown in \Cref{tab:entropy}. The four power-score regrets use $\beta\in\{1.25,1.5,1.75,2\}$; as $\beta\to1$ the per-token term, divided by $\beta-1$, tends to the KL term of \Cref{tab:kl}. 

\subsubsection{Experimental setup}
\label{app:proper-pairs}

We evaluate the losses and divergences on eight configurations, each
defined by a sampling policy, a reference model, and a generation horizon
(\Cref{tab:proper-pairs}). These cover seven distinct policy--reference
pairs: the GPT-2 sentiment pair is evaluated at two horizons, one of $50$
tokens that reuses the setting of \citet{amini2025better} for reproducibility,
and one of $256$ tokens to match the longer window used for all other models.
Model repositories are listed in \Cref{tab:proper-repos}.

\paragraph{Shared sampling protocol.}
Each configuration uses $512$ prompts and $B$ responses per prompt,
with $B$ specified in \Cref{tab:proper-pairs}. Responses are drawn by
ancestral sampling at temperature $1$, without top-$k$ or top-$p$
filtering, using fp32 precision. Generation stops at \eos{} or the
specified horizon.
\begin{table}[H]
\centering
\caption{Policy--reference banks of the proper-loss experiment. Labels
are those used in the figures; $\plm$ is the sampling policy and $\qlm$ the
reference; model names are resolved in \Cref{tab:proper-repos}. All banks have
$512$ prompts and $B$ responses per prompt.}
\label{tab:proper-pairs}
\footnotesize
\setlength{\tabcolsep}{3pt}
\begin{tabular}{@{}llllrr@{}}
\toprule
bank & prompts & sampling policy $\plm$ & reference $\qlm$ & horizon & $B$ \\
\midrule
IMDb-256 & IMDb prefixes & gpt2-imdb + DPO & gpt2-imdb & 256 & 4000 \\
IMDb-50 & IMDb prefixes & same policy & gpt2-imdb & 50 & 4000 \\
T\"ulu-3 & AlpacaEval 2 & Tulu-3-8B-DPO & Tulu-3-8B-SFT & 256 & 100 \\
Gemma It3 vs It0 & AlpacaEval 2 & SPPO-Iter3 & gemma-2-9b-it & 256 & 100 \\
Gemma It1 vs It0 & AlpacaEval 2 & SPPO-Iter1 & gemma-2-9b-it & 256 & 100 \\
Gemma It2 vs It0 & AlpacaEval 2 & SPPO-Iter2 & gemma-2-9b-it & 256 & 100 \\
Gemma It2 vs It1 & AlpacaEval 2 & SPPO-Iter2 & SPPO-Iter1 & 256 & 100 \\
Gemma It3 vs It2 & AlpacaEval 2 & SPPO-Iter3 & SPPO-Iter2 & 256 & 100 \\
\bottomrule
\end{tabular}
\end{table}

\paragraph{GPT-2-imbd + DPO: policy training.}
We train the policy $p$ ourself, following the  setting of \citet{amini2025better}.
The reference $\qlm$ is \texttt{gpt2-imdb}; we obtain the policy
$\plm$ by fine-tuning it for positive sentiment with direct preference
optimization (DPO) on $5{,}000$ IMDb training reviews. Training prompts
are $2$--$8$-token prefixes. Four sampled responses per prompt form
six response pairs, with preferences determined by
\texttt{distilbert-imdb}. We train for one epoch ($938$ steps), using
$\beta_{\mathrm{DPO}}=0.1$ and a learning rate of $3\times10^{-6}$.

\paragraph{GPT-2 sentiment: evaluation prompts.}
We select $512$ reviews from the IMDb test split using a fixed seed.
For each review, we draw a prefix length uniformly from
$\{2,\ldots,8\}$ and retain the corresponding GPT-2 token IDs as the
prompt. We draw $4{,}000$ responses per prompt at a primary horizon
of $256$ tokens and a control horizon of $50$ tokens, using the same
policy and reference in both configurations.

\paragraph{Chat models: policy--reference comparisons.}
For T\"ulu-3, we compare the published DPO policy with its SFT
reference. For Gemma-2, we use the three published SPPO iterations,
denoting the initial instruction model by iteration $0$.
Comparing iterations $1$, $2$, and $3$ with iteration $0$ measures
accumulated drift. We additionally compare iteration $2$ with $1$
and iteration $3$ with $2$ to examine local drift between successive
updates.

\paragraph{Chat models: evaluation prompts.}
The chat-model configurations use $512$ instructions from the
$805$-instruction AlpacaEval~2 set.  Instructions are rendered with the corresponding model's
chat template; within each policy--reference pair, both models
receive identical token IDs. Each prompt receives $100$ responses,
with a horizon of $256$ tokens.

\begin{table}[H]
\centering
\caption{Model repositories (Hugging Face) for the names used in
\Cref{tab:proper-pairs}.}
\label{tab:proper-repos}
\footnotesize
\begin{tabular}{@{}ll@{}}
\toprule
model & repository \\
\midrule
gpt2-imdb & \texttt{lvwerra/gpt2-imdb} \\
gpt2-imdb + DPO & trained by us from \texttt{lvwerra/gpt2-imdb} (recipe above) \\
distilbert-imdb (DPO preference judge) & \texttt{lvwerra/distilbert-imdb} \\
Tulu-3-8B-SFT & \texttt{allenai/Llama-3.1-Tulu-3-8B-SFT} \\
Tulu-3-8B-DPO & \texttt{allenai/Llama-3.1-Tulu-3-8B-DPO} \\
gemma-2-9b-it (SPPO Iter0) & \texttt{google/gemma-2-9b-it} \\
SPPO-Iter1 & \texttt{UCLA-AGI/Gemma-2-9B-It-SPPO-Iter1} \\
SPPO-Iter2 & \texttt{UCLA-AGI/Gemma-2-9B-It-SPPO-Iter2} \\
SPPO-Iter3 & \texttt{UCLA-AGI/Gemma-2-9B-It-SPPO-Iter3} \\
\bottomrule
\end{tabular}
\end{table}

\subsubsection{Additional results}
\label{app:proper-results}

\paragraph{Variance reduction across quantities.}
\Cref{fig:exp1-magnitude-all} extends the magnitude--variance
comparison of \Cref{fig:exp1-proper}(b) to all seven quantities.
For each quantity, it relates the variance ratio to the estimated
magnitude of the quantity, showing both configuration-level
aggregates and individual prompts.

\begin{figure}[H]
\centering
\includegraphics[width=\textwidth]{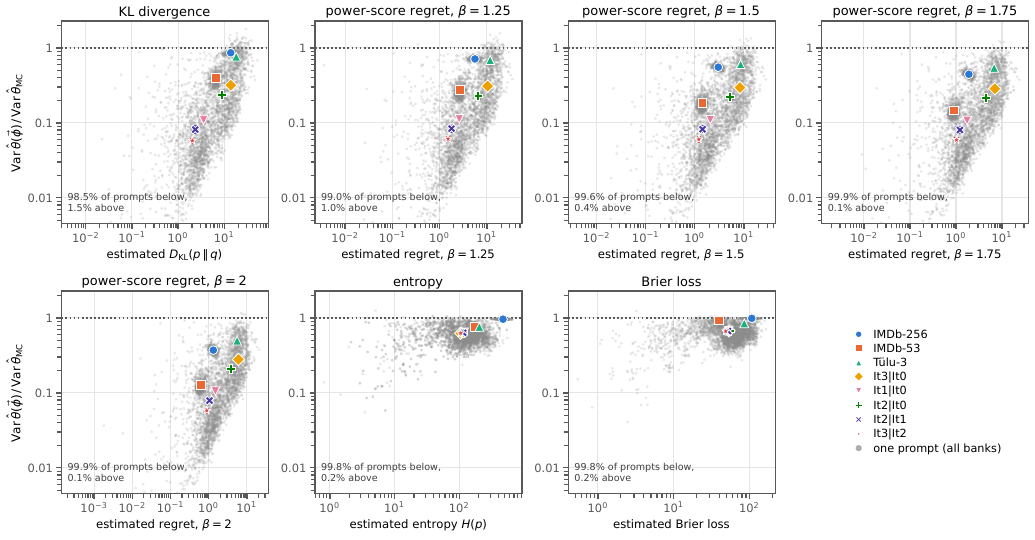}
\caption{\footnotesize Variance reduction as a function of quantity magnitude for all losses and divergences in the proper-loss experiment. Each panel shows the variance ratio $\Var_{\plm}(\hest(\ppot))/\Var_{\plm}(\thmc)$ against the estimated magnitude of the quantity, with one colored marker per policy--reference bank and grey points for individual prompts. For KL and the power-score regrets, variance reduction becomes stronger as the policy--reference divergence decreases, with this dependence becoming more pronounced for larger $\beta$. Entropy and Brier loss instead remain comparatively close to the Monte Carlo baseline across banks.}

\label{fig:exp1-magnitude-all}
\end{figure}

\paragraph{Prompt-level behavior and control variates.}
\Cref{tab:prompt-cv} summarizes the per-prompt variance ratios of
the potential estimator and its cross-fitted control-variate
variant relative to Monte Carlo. It also reports separately the
prompts on which the potential alone increases variance, showing
how fitting the control-variate coefficient changes their ratios
and how often it brings them below one.

\begin{table}[H]
\centering\footnotesize
\setlength{\tabcolsep}{3pt}
\resizebox{\linewidth}{!}{%
\begin{tabular}{@{}l cc cc r c c c@{}}
\toprule
& \multicolumn{2}{c}{$\hest(\ppot)$ vs.\ $\thmc$} & \multicolumn{2}{c}{$\hest(\ppot,\astar)$ vs.\ $\thmc$} & \multicolumn{4}{c}{prompts with $\Var_{\plm}(\hest(\ppot))>\Var_{\plm}(\thmc)$} \\
\cmidrule(lr){2-3}\cmidrule(lr){4-5}\cmidrule(l){6-9}
quantity & median [IQR] & \% $<1$ & median [IQR] & \% $<1$ & $n$ & \begin{tabular}[c]{@{}c@{}}med.\\[-1pt]$\hest(\ppot)$\end{tabular} & \begin{tabular}[c]{@{}c@{}}med.\ $\hest(\ppot,\astar)$\\[-1pt][95\% CI]\end{tabular} & \% $<1$ \\
\midrule
$\KL$ & 0.204 [0.061, 0.503] & 98.5 & 0.204 [0.061, 0.482] & 99.5 & 63 & 1.134 & 0.974 [0.945, 0.990] & 65.1 \\
regret $\beta{=}1.25$ & 0.202 [0.063, 0.441] & 99.0 & 0.193 [0.063, 0.436] & 99.8 & 40 & 1.083 & 0.951 [0.932, 0.973] & 77.5 \\
regret $\beta{=}1.5$ & 0.164 [0.063, 0.400] & 99.6 & 0.160 [0.063, 0.396] & 99.9 & 15 & 1.058 & 0.935 [0.891, 1.044] & 66.7 \\
regret $\beta{=}1.75$ & 0.142 [0.062, 0.365] & 99.9 & 0.141 [0.061, 0.363] & 99.9 & 4 & 1.190 & 1.080 [0.925, 1.102] & 25.0 \\
regret $\beta{=}2$ & 0.131 [0.060, 0.330] & 99.9 & 0.129 [0.060, 0.329] & 99.9 & 3 & 1.125 & 1.040 [1.009, 1.043] & 0.0 \\
entropy & 0.633 [0.509, 0.796] & 99.8 & 0.630 [0.498, 0.792] & 99.5 & 10 & 1.047 & 1.090 [1.050, 1.096] & 0.0 \\
Brier loss & 0.694 [0.545, 0.903] & 99.8 & 0.698 [0.543, 0.903] & 99.5 & 9 & 1.040 & 1.059 [1.024, 1.080] & 0.0 \\
\bottomrule
\end{tabular}}
\caption{Per-prompt variance ratios for the losses and divergences experiments, pooling all eight model pairs. We compare the potential estimator $\hest(\ppot)$ and its cross-fitted control-variate variant $\hest(\ppot,\astar)$ against Monte Carlo. The rightmost columns restrict attention to prompts for which $\hest(\ppot)$ has higher variance than Monte Carlo and show how often the control variate reduces the ratio below one.}

\label{tab:prompt-cv}
\end{table}

\subsubsection{Reproducing \citet{amini2025better}}
\label{app:exp-kl}
We first reproduce the KL-estimation experiment of \citet{amini2025better} using our
sentiment-tuned policy and compare against the estimators considered in their
paper. Their term-wise Rao--Blackwellized estimator is recovered as
$\hest({\ppot})$, corresponding to a unit coefficient on the associated
zero-mean correction. We reproduce its variance reduction over
Monte Carlo (\Cref{tab:kl-estimators}). Fitting the control-variate coefficient reduces variance further:
at $M=1$, the standard deviation decreases from $2.05$ to $1.90$. Thus, the
unit coefficient is not in general variance-optimal.
\begin{table}[H]
\centering\small
\setlength{\tabcolsep}{5pt}
\begin{tabular}{@{}lrrrr@{}}
\toprule
& & \multicolumn{3}{c}{std.\ dev.} \\
\cmidrule(lr){3-5}
estimator & mean & $M{=}1$ & $M{=}5$ & $M{=}10$ \\
\midrule
$\thmc$                         & 6.69 & 3.32 & 1.49 & 1.05 \\
HT                              & 6.69 & 3.32 & 1.49 & 1.05 \\
Schulman CV, $\cvcoef=1$        & 6.47 & 35.67 & 15.95 & 11.28 \\
Schulman CV, fitted $\cvcoef$   & 6.36 & 11.44 & 5.11 & 3.61 \\
$\hest(\ppot)$                  & 6.69 & 2.05 & 0.92 & 0.65 \\
$\hest(\ppot,\astar)$           & 6.69 & $\mathbf{1.90}$ & $\mathbf{0.85}$ & $\mathbf{0.60}$ \\
\bottomrule
\end{tabular}
\caption{Reproduction of the KL-estimation experiment of \citet{amini2025better}:
mean estimate and empirical standard deviation (nats) at sample sizes
$M=1,5,10$, over $512$ prompts with $4{,}000$ samples each. The mean does not
depend on $M$. Lowest standard deviation per column in bold.}
\label{tab:kl-estimators}
\end{table}

\paragraph{Counterexample to the variance guarantee of \citet{amini2025better}.}
\label{app:counterexample}
Theorem~2 of \citet{amini2025better} claims that their term-wise Rao--Blackwellized estimator never has larger variance than Monte Carlo. Consider language models $\plm$ and $\qlm$ supported on length-two strings, with

$$
\plm(ax)=\plm(bx)=\tfrac12,
\qquad
\qlm(ax)=\qlm(bx)=\tfrac14,
\qquad
\qlm(bz)=\tfrac12,
$$

and zero probability elsewhere. Under $\plm$, $\log \frac{\plm(Y)}{\qlm(Y)}=\log2$ for both possible strings, so $\Var_{\plm}(\thmc)=0$. Their estimator, corresponding to $\hest(\ppot)$ in our notation, instead equals $\frac12\log\frac43$ or $\frac12\log\frac43+\log3$ with equal probability. It remains unbiased, but

$$
\Var_{\plm}\!\bigl(\hest(\ppot)\bigr)=\tfrac14(\log3)^2>0=\Var_{\plm}(\thmc).
$$

This also follows directly from \Cref{prop:potential-estimator}: since $\thmc$ is constant,
$\Cov_{\plm}(\thmc,\thmc-\hest(\ppot))=0<\frac12\Var_{\plm}(\thmc-\hest(\ppot))$, so the condition for variance reduction fails. Accordingly, \Cref{prop:optimal-alpha} gives $\astar=0$, whereas the estimator of \citet{amini2025better} corresponds to $\cvcoef=1$.

\subsection{Gradients of Losses and Divergences}
\label{app:exp-gradients}

\subsubsection{Potential definitions}
\label{app:grad-potentials}

Let $\mathbf w\in\R^d$ denote the parameters of $\plm$, and write
$\nabla\defeq\nabla_{\mathbf w}$, holding $\qlm$ fixed. The original test
function $\testf$ is scalar-valued, whereas its gradient with respect to
$\mathbf w$ is $d$-dimensional. 
This gives the vector-valued test function
\begin{equation}
\label{eq:grad-test-function}
\boldsymbol{\testf}_\nabla(\yvar)
\defeq
\nabla\testf(\yvar)
+
\testf(\yvar)\,\nabla\log\plm(\yvar)
\in\R^d.
\end{equation}
The first term differentiates the function value for a fixed string, while
the second accounts for the change in its sampling probability. Under
standard regularity conditions,
\begin{equation}
\label{eq:grad-score-function}
\nabla\,\E_{\plm}[\testf(\yvar)]
=
\E_{\plm}[\boldsymbol{\testf}_\nabla(\yvar)].
\end{equation}
Gradient estimation therefore fits our framework by applying the potential
estimator to $\boldsymbol{\testf}_\nabla$ coordinatewise. Given a scalar potential $\ppot$ for $\testf$, we define the corresponding
vector-valued gradient potential
\begin{equation}
\label{eq:grad-potential}
\boldsymbol{\ppot}_\nabla(\xprefix)
\defeq
\nabla\ppot(\xprefix)
+
\ppot(\xprefix)\,\nabla\log\pprob(\xprefix)
\in\R^d.
\end{equation}
Replacing $\ppot$ by $\ppot^\star$ gives the corresponding oracle gradient
potential.

Because the next-token probabilities in $\mathbf B$ also depend on
$\mathbf w$, the expected gradient increment satisfies
\begin{equation}
\label{eq:grad-expected-increment}
\mathbf B\Delta\boldsymbol{\ppot}_\nabla(\xprefix)
=
\nabla\mathbf B\Delta\ppot(\xprefix)
+
\mathbf B\Delta\ppot(\xprefix)\,
\nabla\log\pprob(\xprefix).
\end{equation}
We extend the operators $\Delta$ and $\mathbf B$ to vector-valued potentials
coordinatewise. \Cref{tab:grad} gives the resulting constructions for the losses in
\Cref{tab:entropy,tab:kl,tab:brier,tab:power}. For KL,
$\mathbf B\Delta\ppot(\xprefix)$ is the KL divergence between the
next-token distributions, recovering the local gradient expression of
\citet{amini2025better}.

\medskip
\begingroup
\setlength{\dtcardextra}{0pt}\setlength{\dttaggutter}{78pt}
\refstepcounter{table}\label{tab:grad}%
\dtcard{@{}V@{\hspace{5pt}}l@{\hspace{5pt}}V l@{}}{%
\toprule
\dtcardhead{Table~\thetable: Gradient of a loss, $\nabla\,\E_{\plm}[\testf(\yvar)]$} & \\
\midrule
$\boldsymbol{\testf}_\nabla(\yvar)=\nabla\testf(\yvar)+\testf(\yvar)\,\nabla\log\plm(\yvar)$
& \rowtag{Test function}\\
\midrule
$\begin{aligned}[t]
\Delta\boldsymbol{\ppot}_\nabla(a\mid\xprefix)&=\nabla\Delta\ppot(a\mid\xprefix)+\Delta\ppot(a\mid\xprefix)\,\nabla\log\pprob(\xprefix a)\\
&\quad+\ppot(\xprefix)\,\nabla\log\pprob(a\mid\xprefix)
\end{aligned}$
& \rowtag{Increment ($a\in\alphabet$)}\\
\midrule
$\begin{aligned}[t]
\boldsymbol{\testf}_\nabla(\xprefix\zprefix)&=\nabla\ppot(\xprefix)+\ppot(\xprefix)\,\nabla\log\pprob(\xprefix)\\
&\quad+\nabla\bigl[\testf(\xprefix\zprefix)-\ppot(\xprefix)\bigr]+\bigl[\testf(\xprefix\zprefix)-\ppot(\xprefix)\bigr]\,\nabla\log\plm(\xprefix\zprefix)\\
&\quad+\ppot(\xprefix)\,\nabla\log\pprob(\zprefix\mid\xprefix)
\end{aligned}$
& \rowtag{Function structure}\\
\midrule
$\boldsymbol{\ppot}^\star_\nabla(\xprefix)=\nabla\ppot^\star(\xprefix)+\ppot^\star(\xprefix)\,\nabla\log\pprob(\xprefix)$
& \rowtag{oracle potential}\\
\midrule
$\boldsymbol{\ppot}_\nabla(\xprefix)=\nabla\ppot(\xprefix)+\ppot(\xprefix)\,\nabla\log\pprob(\xprefix)$
& \rowtag{Potential}\\
\midrule
$\Delta\boldsymbol{\ppot}_\nabla(\eos\mid\xprefix)=\nabla\Delta\ppot(\eos\mid\xprefix)+\Delta\ppot(\eos\mid\xprefix)\,\nabla\log\plm(\xprefix)+\ppot(\xprefix)\,\nabla\log\pprob(\eos\mid\xprefix)$
& \rowtag{\eos{} increment}\\
\midrule
$\mathbf B\Delta\boldsymbol{\ppot}_\nabla(\xprefix)=\nabla\mathbf B\Delta\ppot(\xprefix)+\mathbf B\Delta\ppot(\xprefix)\,\nabla\log\pprob(\xprefix)$
& \rowtag{Expected increment}\\
\bottomrule}
\endgroup

\subsubsection{Control-variate coefficient for vector-valued functions}
\label{app:grad-cv}

For the gradient we need to fit a global control-variate coefficient when the
function value lies in $\R^{d}$. We describe the construction for a general
vector-valued test function $\boldsymbol{\testf}\colon\kleene{\alphabet}\to\R^{d}$ with a
potential $\boldsymbol{\ppot}\colon\kleene{\alphabet}\to\R^{d}$, both read
coordinatewise. Let $\boldsymbol{\hest}_{\mathrm{MC}},\,\boldsymbol{\hest}(\boldsymbol{\ppot})\in\R^{d}$ denote the Monte
Carlo and potential estimators for one trajectory, and define the zero-mean
correction
\begin{equation}
\boldsymbol{D}
\defeq
\boldsymbol{\hest}_{\mathrm{MC}}-\boldsymbol{\hest}(\boldsymbol{\ppot}).
\end{equation}
As in \Cref{prop:optimal-alpha}, we use a single scalar coefficient, here shared
across all $d$ coordinates:
\begin{equation}
\boldsymbol{\hest}(\boldsymbol{\ppot},\cvcoef)
\defeq
\boldsymbol{\hest}_{\mathrm{MC}}-\cvcoef\,\boldsymbol{D}.
\end{equation}
Thus, $\cvcoef=0$ recovers Monte Carlo and $\cvcoef=1$ the potential estimator.
We choose $\cvcoef$ to minimize the total variance
$\tr\Var_{\plm}\bigl(\boldsymbol{\hest}(\boldsymbol{\ppot},\cvcoef)\bigr)$, giving
\begin{equation}
\astar
=
\frac{\tr\Cov_{\plm}\bigl(\boldsymbol{\hest}_{\mathrm{MC}},\boldsymbol{D}\bigr)}{\tr\Var_{\plm}(\boldsymbol{D})}
=
\frac{
\E_{\plm}\!\left[
\left\langle
\boldsymbol{\hest}_{\mathrm{MC}}-\E_{\plm}[\boldsymbol{\hest}_{\mathrm{MC}}],\;
\boldsymbol{D}-\E_{\plm}[\boldsymbol{D}]
\right\rangle
\right]
}{
\E_{\plm}\!\left[
\|\boldsymbol{D}-\E_{\plm}[\boldsymbol{D}]\|_2^2
\right]
}.
\end{equation}
Given sampled trajectories indexed by prompt $i$ and response $j$, we estimate
this globally as
\begin{equation}
\widehat{\cvcoef}
=
\frac{
\sum_{i,j}
\left\langle
\boldsymbol{\hest}_{\mathrm{MC}}^{(ij)}-\bar{\boldsymbol{\hest}}_{\mathrm{MC}},\;
\boldsymbol{D}^{(ij)}-\bar{\boldsymbol{D}}
\right\rangle
}{
\sum_{i,j}
\|\boldsymbol{D}^{(ij)}-\bar{\boldsymbol{D}}\|_2^2
},
\end{equation}
where the bars denote means over all trajectories. The coefficient is therefore
shared across prompts and coordinates. For the gradient experiments we take
$\boldsymbol{\testf}=\boldsymbol{\testf}_\nabla$ and $\boldsymbol{\ppot}=\boldsymbol{\ppot}_\nabla$ from
\Cref{app:grad-potentials}, with $d$ the number of policy parameters.

\subsubsection{Experiments and results}
\label{app:grad-results}

\textbf{Marginalization reduces gradient variance across all 135 defined quantity–checkpoint comparisons.} We repeat the comparison for the gradient of each quantity with respect to the policy parameters along a DPO training trajectory of the GPT-2 sentiment model (\Cref{fig:exp2-grad}). We evaluate $20$ checkpoints using $512$ prompts and $20$ responses per prompt, and measure the variance of a gradient estimator $\boldsymbol{\hest}$ by $\tr\Var_{\plm}(\boldsymbol{\hest})$. Across all $135$ quantity--checkpoint comparisons, the potential estimator $\boldsymbol{\hest}(\boldsymbol{\ppot}_\nabla)$ reduces variance relative to the centered Monte Carlo gradient $\boldsymbol{\hest}_{\mathrm{MC}}$, with variance ratios $\tr\Var_{\plm}(\boldsymbol{\hest}(\boldsymbol{\ppot}_\nabla))/\tr\Var_{\plm}(\boldsymbol{\hest}_{\mathrm{MC}})$ between $0.11$ and $0.98$.

\textbf{The gains are largest early in training.} The gains are largest early in training and generally decrease as training progresses. For example, the KL gradient retains $35\%$ of the Monte Carlo variance at step $50$ and $92\%$ at step $938$, while the $\beta=2$ regret retains $11\%$ and $65\%$, respectively. Among the power-score regrets, larger $\beta$ consistently yields larger reductions at every checkpoint.

\textbf{Fitting $\astar$ further improves gradients for entropy and Brier score.} For some gradients, the control-variate estimator $\boldsymbol{\hest}(\boldsymbol{\ppot}_\nabla,\astar)$ provides an additional gain over $\boldsymbol{\hest}(\boldsymbol{\ppot}_\nabla)$. The unit-coefficient estimator $\boldsymbol{\hest}(\boldsymbol{\ppot}_\nabla)$ alone barely improves the entropy and Brier gradients, with variance ratios of $0.97$--$0.98$. Their fitted coefficients are substantially larger, $\astar\approx5$ for entropy and $\astar\approx11$ for Brier, reducing the ratios to $0.91$ and $0.86$. For the power-score regrets, by contrast, $\astar$ remains close to one ($0.66$--$0.99$), and fitting it provides at most about a further $3\%$ reduction.

\begin{figure}[H]
\centering
\begin{minipage}[t]{0.33\textwidth}
\centering
{\footnotesize\textbf{(a)} gradient variance ratio}\\[3pt]
\includegraphics[width=\linewidth]{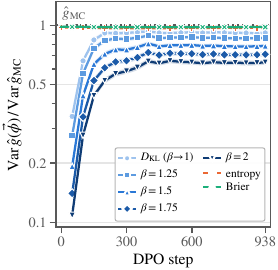}
\end{minipage}\hfill
\begin{minipage}[t]{0.33\textwidth}
\centering
{\footnotesize\textbf{(b)} cross-fitted control variate}\\[3pt]
\includegraphics[width=\linewidth]{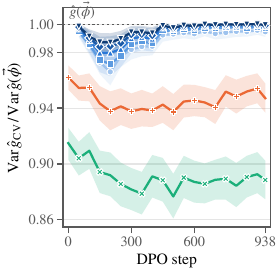}
\end{minipage}\hfill
\begin{minipage}[t]{0.33\textwidth}
\centering
{\footnotesize\textbf{(c)} fitted coefficient $\astar$}\\[3pt]
\includegraphics[width=\linewidth]{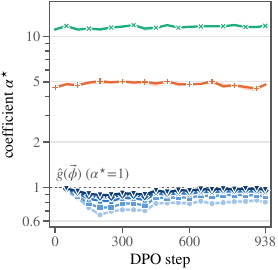}
\end{minipage}
\caption{\footnotesize Gradient estimation along a DPO trajectory of the GPT-2 sentiment
policy (checkpoints every $50$ steps; the variance of an estimator $\boldsymbol{\hest}$ is
$\tr\Var_{\plm}(\boldsymbol{\hest})$ over the $512\times20$ responses). \textbf{(a)} Variance of
$\boldsymbol{\hest}(\boldsymbol{\ppot}_\nabla)$ relative to the centered Monte Carlo gradient $\boldsymbol{\hest}_{\mathrm{MC}}$, one series per
quantity (regrets darker for larger $\beta$; undefined at step $0$).
\textbf{(b)} Variance of the cross-fitted control-variate estimator $\boldsymbol{\hest}(\boldsymbol{\ppot}_\nabla,\astar)$
relative to $\boldsymbol{\hest}(\boldsymbol{\ppot}_\nabla)$. \textbf{(c)} The cross-fitted coefficient, mean over
$8$ prompt folds with the range as band; $\astar=1$ recovers $\boldsymbol{\hest}(\boldsymbol{\ppot}_\nabla)$. Bands in (a) and (b) are $95\%$ prompt-cluster bootstrap intervals;
one training run and one seed.}
\label{fig:exp2-grad}
\end{figure}

\subsection{$n$-gram Counts and Frequencies}
\label{app:exp-ngram}

\subsubsection{Definition of frequency estimator}
\label{app:ngram-freq}

For $n$-gram statistics one is often interested in the frequency of a target rather than solely its expected count. Let $\testf(\yvar)=\#\{\text{occurrences of }w\text{ in }\yvar\}$ be the count, the test function of \Cref{tab:ngram}, and {$N_n(\yvar)$ the number of word $n$-gram slots in $\yvar$, i.e., the number of positions in the decoded text of $\yvar$ at which $n$ consecutive words occur (for $n=1$, the number of words). Here a word is a maximal run of ASCII letters, and consecutive words must be separated by a single space, so punctuation ends an $n$-gram.} The frequency of a substring $w$ made of $n$ {such} words is $\estimand_{\mathrm{FR}}\defeq\E[\testf(\yvar)]/\E[{N_n}(\yvar)]${. Since every slot holds exactly one $n$-gram, the counts of all word $n$-grams of order $n$ sum to $N_n(\yvar)$, so, as a function of $w$, $\estimand_{\mathrm{FR}}$ is the marginal distribution over word $n$-grams of order $n$. We estimate it} with the ratio estimators
\begin{equation}
\label{eq:frequency-estimators}
\thfr
\defeq
\frac{\thmc}{\frac{1}{M}\sum_{m=1}^{M}{N_n}(\yvar^{(m)})},
\qquad
\thfr(\ppot)
\defeq
\frac{\hest(\ppot)}{\frac{1}{M}\sum_{m=1}^{M}{N_n}(\yvar^{(m)})},
\end{equation}
where $\thmc$ and $\hest(\ppot)$ are the Monte Carlo and potential estimators of $\E[\testf(\yvar)]$ built from \Cref{tab:ngram}, and the denominator is the Monte Carlo estimate of $\E[{N_n}(\yvar)]$ over the same $M$ generations. Ratio estimators are biased at finite sample size, but the bias vanishes asymptotically under standard regularity conditions \citep{cochran1977sampling}.

\subsubsection{Definition of cost-adjusted variance ratio}
\label{app:ngram-rho}

For each target $w$, we compare the potential estimator against Monte Carlo at
equal sample size and equal compute. For either the expected count or frequency,
we define
\begin{equation}
\label{eq:variance-ratio}
\widehat{\rho}(w)
=
\frac{
\widehat{\Var}\bigl(\text{potential estimator at }M=4{,}000\bigr)
}{
1000\,
\widehat{\Var}\bigl(\text{Monte Carlo estimator at }M=4\times10^6\bigr)
},
\qquad
\widehat{\rho}_{c}(w)
=
{(L_w+1)}\,\widehat{\rho}(w).
\end{equation}
The factor $1000$ rescales the Monte Carlo variance from $4\times10^6$ to
$4{,}000$ samples. This rescaling is exact for expected counts, whose estimator
is a sample mean. For frequencies, which are ratios of sample means, the same $1/M$ variance
scaling holds asymptotically by the multivariate delta method
\citep[\S1.3]{rieder2012robust}.

{The per-target token-evaluation cost proxy is $c_w=1+(L_w-1)+1=L_w+1$: one generation position supplies the first-token probability, $L_w-1$ positions supply the remaining target-token probabilities, and one position supplies the closing-boundary probability. A single-token target therefore has cost multiplier $2$. Under this proxy and the $1/M$ variance scaling above, $\widehat{\rho}_{c}(w)$ estimates the fraction of Monte Carlo compute needed to attain the same variance. For frequency ratios, this comparison does not establish equal mean squared error because finite-sample bias is not included.}

\FloatBarrier
\subsubsection{Observed-prefix potential}
\label{app:ngram-observed}
The $Z^{\mathrm{end}}$ decomposition gives the observed-prefix potential
$\ppot(\xprefix)=\testf(\xprefix)$ of \Cref{tab:ngram-observed}. Its expected increment is
nonzero only at prefixes that already match the substring up to its final token, so the
estimator averages over that token alone. \Cref{tab:ngram-observed-var} reports the median
variance of the resulting estimator relative to Monte Carlo at equal sample size
($M=4{,}000$) for the unigram targets of GPT-2, banded by their occurrences in the
$4{,}000$ samples (very rare $1$--$9$, rare $10$--$99$, moderate $100$--$999$, frequent
$\geq1{,}000$).

\medskip
\noindent
\begin{minipage}[t]{0.53\textwidth}
\setlength{\dtcardextra}{0pt}
\refstepcounter{table}\label{tab:ngram-observed}%
\dtcard{@{}V@{\hspace{5pt}}l@{\hspace{5pt}}V@{}}{%
\toprule
\dtcardhead{Table~\thetable: Substring counts, observed-prefix potential} \\
\midrule
$\testf(\yvar)=\#\{\text{occurrences of }\boldsymbol{w}\text{ in }\yvar\}$\\
\midrule
$\testf(\yvar)=\sum_{t}Z^{\mathrm{end}}_{\boldsymbol{w},t}$, \ $Z^{\mathrm{end}}_{\boldsymbol{w},t}=\1\{\boldsymbol{w}\text{ ends at }t\}$\\
\midrule
{$\ppot^\star(\xprefix)=\E_{\plm}\!\left[\sum_{t=1}^{|\yvar|}Z^{\mathrm{end}}_{\boldsymbol{w},t}\;\middle|\;\yvar\succeq\xprefix\right]$}\\
\midrule
$\ppot(\xprefix)=\testf(\xprefix)=\sum_{t\le|\xprefix|}Z^{\mathrm{end}}_{\boldsymbol{w},t}$\\
\midrule
$\Delta\ppot(a\mid\xprefix)=\1\{\boldsymbol{w}\text{ ends at }|\xprefix|+1\text{ in }\xprefix a\}$\\
\midrule
$\Delta\ppot(\eos\mid\xprefix)=0$\\
\midrule
$\mathbf B\Delta\ppot(\xprefix)=\Pr_{\plm}\bigl[\boldsymbol{w}\text{ ends at }|\xprefix|+1\mid\yvar\succeq\xprefix\bigr]$\\
\bottomrule}
\end{minipage}\hfill
\begin{minipage}[t]{0.45\textwidth}
\vspace{0pt}\centering\footnotesize
\captionof{table}{Variance of the observed-prefix estimator relative to Monte
Carlo at equal sample size ($M=4{,}000$): median over the unigram targets of
GPT-2, banded by their occurrence count in the $4{,}000$ samples (columns).}
\label{tab:ngram-observed-var}
\setlength{\tabcolsep}{2pt}
\begin{tabular}{@{}lcccc@{}}
\toprule
model & $1$--$9$ & $10$--$99$ & $100$--$999$ & $\ge1{,}000$ \\
\midrule
GPT-2 ($854$) & $0.997$ & $0.996$ & $0.993$ & $0.997$ \\
\bottomrule
\end{tabular}
\end{minipage}
\medskip

\subsubsection{Results for expected counts}
\label{app:ngram-counts}

\textbf{Results for expected counts.} \Cref{fig:headline-count} repeats
\Cref{fig:headline} for the expected count $\theta_w=\E[C_w]$ instead of the frequency;
{the results again show the largest gains for rare targets, with some common targets costing more than Monte Carlo at equal variance.}

{\textbf{Half the retained targets reach equal variance on at most $0.80\%$ of the Monte Carlo compute.} \Cref{fig:headline-count}a shows the cost-adjusted variance ratio
$\widehat{\rho}_c$ for the $4{,}149$ of $7{,}647$ targets whose ratio can be estimated with relative
standard error at most $2$ ($3{,}700$ for frequencies), together with approximate per-target
$95\%$ log-ratio confidence intervals. Under the per-target cost proxy, the median ratio is
$0.79\%$ of the Monte Carlo budget, and approximately $76.8\%$ of the retained targets require at most $10\%$.
Replacing each ratio by the upper bound of its $95\%$ interval gives a median of
$1.78\%$ of the Monte Carlo budget; approximately $77.2\%$ of the retained targets have upper bounds at most
$20\%$, and $95.4\%$ remain below parity with Monte Carlo.}

\textbf{The gains are largest for rare targets.} We restrict to the same $2{,}831$
targets as for frequencies, those whose count estimates have nominal RSE at most $20\%$
under a Poisson approximation, equivalently at least $25$ expected occurrences in the
independent $2\times10^6$-sample frequency bank. The compute ratio increases strongly with
{the expected count per document (Spearman $r_s=0.87$), with its median rising from $2.3\times10^{-3}$ in the rarest populated decade to $1.24$ among targets with one to ten expected occurrences per document (\Cref{fig:headline-count}b). For the latter group, the median exceeds parity with Monte Carlo under the cost proxy.}

\textbf{At the same sample size ($M=4{,}000$), the potential estimator yields
substantially more precise count estimates}, with $79.7\%$ of targets reaching RSE
$\leq 20\%$ and $96.5\%$ reaching RSE $\leq 50\%$, compared with $11.4\%$ and $19.0\%$ for
Monte Carlo (\Cref{fig:headline-count}c). The gap grows with token length: for targets of
length $4$ or more, Monte Carlo almost never reaches RSE $\leq 50\%$, whereas the potential
estimator does so for at least $90\%$ of targets in every length group. For $L_w\geq 7$,
Monte Carlo does not observe any target even in the $4\times10^6$-sample reference bank,
while the potential estimator gives a finite estimate for all of them. Every share agrees
with the corresponding share for frequencies to within $0.14$ percentage points.

\begin{figure}[!htb]
\centering
\begin{minipage}[t]{0.33\textwidth}
\centering
{\footnotesize\textbf{(a)} $\widehat{\rho}_c(w)$ with log-ratio 95\% CI}\\[3pt]
\includegraphics[width=\linewidth]{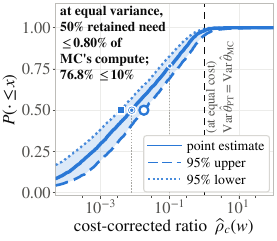}
\end{minipage}\hfill
\begin{minipage}[t]{0.33\textwidth}
\centering
{\footnotesize\textbf{(b)} $\widehat{\rho}_c(w)$ against expected count}\\[3pt]
\includegraphics[width=\linewidth]{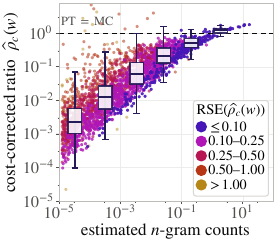}
\end{minipage}\hfill
\begin{minipage}[t]{0.33\textwidth}
\centering
{\footnotesize\textbf{(c)} resolved by token length}\\[3pt]
\includegraphics[width=\linewidth]{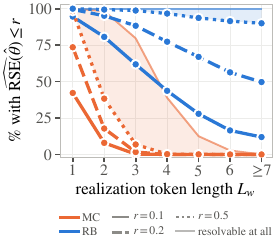}
\end{minipage}
\caption{\footnotesize {\textbf{\Cref{fig:headline} for expected counts.} Cost-adjusted variance ratio $\widehat{\rho}_c(w)$ for $n$-gram count estimation under the per-target cost proxy $L_w+1$. \textbf{(a)} Empirical cumulative distribution over $4{,}149$ targets with bootstrap RSE at most $2$, with approximate $95\%$ log-ratio confidence bounds. \textbf{(b)} Ratio versus estimated expected count per document for the $2{,}831$ targets with at least $25$ expected occurrences in the independent frequency bank. \textbf{(c)} Fraction of targets whose count estimate reaches each RSE threshold, grouped by token length, at the same sample size $M=4{,}000$ for both estimators. Dashed parity lines in (a--b) mark equal variance at equal cost under the proxy; in (c), line styles distinguish RSE thresholds.}}

\label{fig:headline-count}
\end{figure}

\FloatBarrier
\subsubsection{Additional diagnostics}
\label{app:ngram-figures}

\begin{figure}[!t]
\centering
\begin{minipage}[t]{0.26\textwidth}
\centering
{\footnotesize\textbf{(a)} targets and token lengths per order $n$}\\[3pt]
\scriptsize
\setlength{\tabcolsep}{2.3pt}
\renewcommand{\arraystretch}{1.66}
\arrayrulecolor{black}\setlength{\arrayrulewidth}{0.5pt}%
\vspace{1pt}%
\begin{tabular}{|@{\hspace{3pt}}crrrrr@{\hspace{3pt}}|}
\hline
\rule{0pt}{2.1ex}$n$ & $N$ & min & max & mean & med \\
\hline
1 & 869  & 1 & 4  & 1.110 & 1 \\
2 & 1584 & 2 & 5  & 2.128 & 2 \\
3 & 1568 & 3 & 6  & 3.174 & 3 \\
4 & 1391 & 4 & 8  & 4.219 & 4 \\
5 & 1206 & 5 & 9  & 5.266 & 5 \\
6 & 1029 & 6 & 10 & 6.320 & 6 \\
\hline
all & 7647 & -- & -- & -- & -- \\
\hline
\end{tabular}\arrayrulecolor{black}%
\end{minipage}\hfill
\begin{minipage}[t]{0.34\textwidth}
\centering
{\footnotesize\textbf{(b)} share of targets at each $L_w$, per order}\\[3pt]
\includegraphics[width=\linewidth]{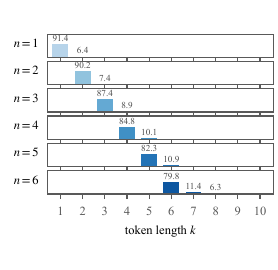}
\end{minipage}\hfill
\begin{minipage}[t]{0.34\textwidth}
\centering
{\footnotesize\textbf{(c)} same, log scale: tails thin alike across $n$}\\[3pt]
\includegraphics[width=\linewidth]{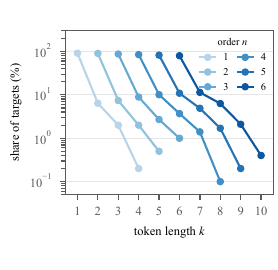}
\end{minipage}
\caption{\footnotesize
\textbf{$n$-gram target inventory and token-length distribution.}
The $n$-gram task provides a controlled way to study targets of different
lengths and, consequently, different marginalization costs, which scale with
the number of GPT-2 tokens $L_w$ in the target.
(a) Number of targets at each order $n$, together with the minimum, maximum,
mean, and median $L_w$ of their canonical realizations.
The inventory contains $7{,}647$ targets in total.
(b) Distribution of token lengths within each order. Increasing $n$ shifts the
inventory toward longer targets, while GPT-2 tokenization also induces
within-order variation in $L_w$.
(c) The same distributions on a log scale, showing that each order additionally
contains a progressively thinning tail of targets requiring more tokens, and
therefore more costly marginalization.
}
\label{fig:ngram-inventory}
\end{figure}

\begin{figure}[!htb]
\centering
\begin{minipage}[t]{0.48\textwidth}
\centering
\includegraphics[width=\linewidth]{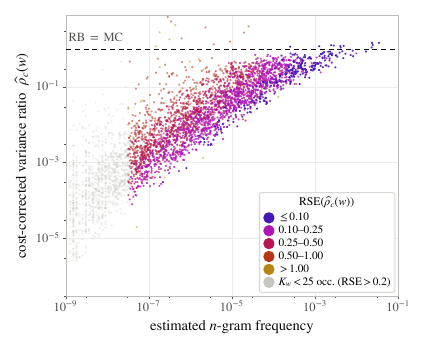}
\end{minipage}\hfill
\begin{minipage}[t]{0.48\textwidth}
\centering
\includegraphics[width=\linewidth]{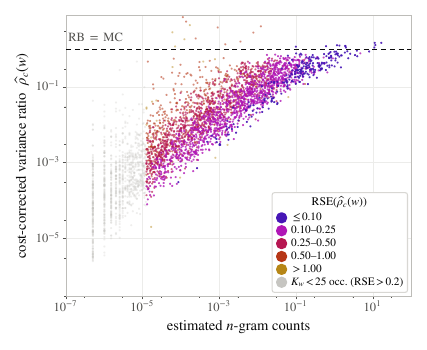}
\end{minipage}
\caption{\footnotesize
\textbf{{Cost-adjusted variance ratio against target rarity before the occurrence filter.}}
{Both panels show the frequency-estimator variance ratio, against estimated $n$-gram frequency (left) and expected count per document (right), before applying the occurrence filter to the independently estimated $x$-axis quantity.}
In the main figure, we retain only targets whose independent frequency estimate
has RSE at most \(20\%\), corresponding to at least \(25\) expected occurrences
in the estimation sample, to restrict the analysis to more reliable estimates.
{Here, we show all $3{,}835$ targets with positive estimated frequency and a finite positive variance ratio; this remains a subset of the $7{,}647$-target inventory.}}
\label{fig:ngram-rho-both-axes}
\end{figure}

\begin{table}[!htb]
\centering
\caption{
\textbf{Variance reduction is strongest for rare targets, even within fixed
$n$-gram order and token length.}
The $2{,}831$ targets of \Cref{fig:headline}b are grouped by their independently
estimated frequency $\thfr$. Each entry reports the median cost-adjusted
variance ratio $10^3\widehat{\rho}_c$ within that frequency bin; lower values
indicate larger gains over Monte Carlo.
\textbf{(a)} The trend across all targets.
\textbf{(b--c)} The same comparison within fixed $n$-gram order and canonical
token length $L_w$, showing that the rarity trend persists after controlling
for target length and marginalization cost.
The bottom rows give the number of targets in each column and Spearman's
correlation $r_s$ between $\thfr$ and $\widehat{\rho}_c$.
Cells with fewer than $8$ targets are omitted.
}
\label{tab:ngram-rarity-strata}

\footnotesize
\setlength{\tabcolsep}{3.8pt}
\begin{tabular}{@{}l rr @{\hspace{10pt}} rrrr @{\hspace{10pt}} rrrr@{}}
\toprule
& \multicolumn{2}{c}{\textbf{(a)} all targets}
& \multicolumn{4}{c}{\textbf{(b)} $n$-gram order}
& \multicolumn{4}{c}{\textbf{(c)} token length $L_w$} \\
\cmidrule(lr){2-3}
\cmidrule(lr){4-7}
\cmidrule(l){8-11}

frequency $\thfr$
& \# targets
& $10^3\widehat{\rho}_c$
& $1$ & $2$ & $3$ & $4$
& $1$ & $2$ & $3$ & $4$ \\
\midrule

$[10^{-8},10^{-7})$ & 354 & {1.5} & {2.0} & {1.5} & {1.5} & {1.3} & -- & {1.6} & {1.5} & {1.3} \\

$[10^{-7},10^{-6})$ & 701 & {5.8} & {6.5} & {6.6} & {5.3} & {5.1} & -- & {6.6} & {5.4} & {5.1} \\

$[10^{-6},10^{-5})$ & 725 & {28} & {35} & {28} & {25} & {17} & {42} & {26} & {25} & {17} \\

$[10^{-5},10^{-4})$ & 684 & {130} & {160} & {110} & {85} & -- & {160} & {110} & {85} & -- \\

$[10^{-4},10^{-3})$ & 299 & {310} & {330} & {260} & -- & -- & {330} & {260} & -- & -- \\

$[10^{-3},10^{-1})$ & 68 & {710} & {740} & {530} & -- & -- & {740} & {530} & -- & -- \\

\midrule

\textit{targets}
& 2{,}831 & --
& 865 & 1{,}295 & 537 & 115
& 794 & 1{,}307 & 587 & 124 \\

\textit{Spearman $r_s$} & -- & {0.85} & {0.72} & 0.80 & 0.73 & 0.69 & 0.67 & 0.78 & 0.74 & 0.69 \\

\bottomrule
\end{tabular}
\end{table}

The variance ratios of \Cref{fig:headline} are point estimates, so we also ask how
many targets are separated from parity once their own estimation error is taken
into account. \Cref{tab:rse-cutoff-boundary} reports this across cumulative
relative-standard-error subsets.

\begin{table}[!htb]
\centering
\caption{%
{\textbf{A majority of targets in each precision subset retain a variance advantage under the cost proxy.}
Among targets with positive estimated frequency and a finite positive variance ratio, each row retains those with bootstrap RSE below the indicated cutoff $s$.
We report the number of retained targets and the percentage satisfying
$\widehat{\rho}_c<1/(1+1.96s)$, a sufficient condition for the approximate normal interval
$\widehat{\rho}_c[1\pm1.96\,\mathrm{RSE}]$ to lie below parity.
These are conservative shares for the normal interval, rather than the log-ratio intervals in \Cref{fig:headline,fig:headline-count}.}
}
\label{tab:rse-cutoff-boundary}

\begin{tabular}{@{}l rr rr@{}}
\toprule
& \multicolumn{2}{c}{Expected count}
& \multicolumn{2}{c}{Frequency} \\
\cmidrule(lr){2-3}\cmidrule(l){4-5}
Max.\ RSE
& \# targets & 95\% CI below $1$
& \# targets & 95\% CI below $1$ \\
\midrule
5\%  & 84      & {61.9\%} & 41      & {70.7\%} \\
10\% & 317     & {84.2\%} & 240     & {89.6\%} \\
20\% & 1{,}048 & {92.2\%} & 953     & {95.2\%} \\
50\% & 2{,}661 & {93.0\%} & 2{,}583 & {94.7\%} \\
\bottomrule
\end{tabular}
\end{table}\FloatBarrier
\subsection{Event Analysis of \citet{dorman2026rare}}
\label{app:exp-events}

\subsubsection{The Automated Readability Index}
\label{app:ari}
We extend the experiments of \citep{dorman2026rare} that estimate the Automated Readability Index. For a string $\yvar$ with $c(\yvar)$
characters, $w(\yvar)$ words, and $s(\yvar)$ sentences,
\begin{equation}
\label{eq:ari}
\mathrm{ARI}(\yvar)
=
4.71\,\frac{c(\yvar)}{w(\yvar)}+0.5\,\frac{w(\yvar)}{s(\yvar)}-21.43,
\end{equation}
a score calibrated
so that its value roughly matches the US school grade needed to read the text.
The events $D_j$ are bins of this score, and the test functions are the
indicators $\1\{\mathrm{ARI}(\yvar)\in D_j\}$.
\paragraph{Experimental setup.}
The experiments follow the authors' setup. The model TinyStories-8M continues a fixed prompt for
$100$ tokens, and the target is the probability that the completion's
$\operatorname{ARI}$ falls in each of $80$ bins of width $0.29$ spanning
$[-8,15]$. The model is tilted by $\exp(-\lambda\operatorname{ARI})$ at $19$
values of $\lambda$; each tilt is sampled by transition-path sampling ($200$
chains of $32{,}400$ retained states) and complemented by $200{,}000$ direct
samples, and the resulting $6.68\times10^{6}$ states are combined with MBAR
weights, which we retain unchanged. Folds are whole chains, stratified by tilting schedule.

\subsubsection{Potential definitions}
\label{app:learned-potentials}

The events are the $J=80$ readability bins $D_j$ of \Cref{app:ari}. For each bin
we learn a potential $\ppot_j(\xprefix;\boldsymbol{\eta})=g_j(\boldsymbol{\xi}(\xprefix);\boldsymbol{\eta})$
of the form of \cref{eq:learned-potential}, approximating the oracle
$\Pr_{\plm}[\mathrm{ARI}(\yvar)\in D_j\mid\yvar\succeq\xprefix]$. The prefix
features $\boldsymbol{\xi}(\xprefix)$ collect the character, word, and sentence
counts of $\xprefix$ together with its position in the generation window. We
consider two models, a logistic model that predicts bin probabilities directly
and a mixture model that forecasts the counts underlying the ARI; both are
fitted by the cross-fitting scheme introduced in \Cref{sec:app-events}, with {potential parameters fitted on the training folds using the globally estimated MBAR weights.}

\paragraph{Logistic potential.}
The logistic model predicts bin probabilities from the prefix features
directly,
\begin{equation}
\ppot^{\mathrm{log}}_j(\xprefix;\boldsymbol{\eta})
=
\frac{
\widehat p_j
\exp\!\left(\boldsymbol{\beta}_j^\top\boldsymbol{\xi}(\xprefix)\right)
}{
\sum_{j'=1}^{J}
\widehat p_{j'}
\exp\!\left(\boldsymbol{\beta}_{j'}^\top\boldsymbol{\xi}(\xprefix)\right)
},
\qquad
\boldsymbol{\eta}=(\boldsymbol{\beta}_1,\ldots,\boldsymbol{\beta}_J),
\end{equation}
where $\widehat p_j$ is the marginal probability of bin $D_j$ estimated from
the training data.  The feature vector $\boldsymbol{\xi}(\xprefix)$ here comprises
11 features: an intercept, the normalized prefix position and its square, the inverse
remaining length and its inverse square root, the character, word, and
sentence counts, and the interactions between each count and the normalized
position, so that the counts can play different roles at different stages of
generation. We fit $\boldsymbol{\eta}$ by minimizing MBAR-weighted
cross-entropy.

\paragraph{Mixture-model potential.}
At each prefix, we linearly extrapolate the character and word counts in
$\boldsymbol{\xi}(\xprefix)$ to the terminal length. We separately model the
number $k$ of additional sentence endings, since a single further sentence
break can change the ARI substantially. Each value of $k$ gives a predicted
ARI $\mu_k(\xprefix)$, around which we place a Gaussian with standard
deviation $\sigma_k(\xprefix)$. Combining these components with their
predicted probabilities $\pi_k(\xprefix)$ gives a Gaussian mixture law for the
terminal $\mathrm{ARI}(\yvar)$ of the completed string, and the potential is the
mass this law assigns to the bin $D_j=[l_j,u_j)$, available in closed form
through the standard normal distribution function $\Phi$:
\begin{equation}
\begin{aligned}
q_{\mathrm{mix}}(\,\cdot\mid\xprefix)
&=
\sum_k \pi_k(\xprefix)\,
\mathcal{N}\!\left(\mu_k(\xprefix),\sigma_k^2(\xprefix)\right),
\\
\ppot^{\mathrm{mix}}_j(\xprefix;\boldsymbol{\eta})
&=
q_{\mathrm{mix}}(D_j\mid\xprefix)
=
\sum_k \pi_k(\xprefix)\left[
\Phi\!\left(\frac{u_j-\mu_k(\xprefix)}{\sigma_k(\xprefix)}\right)
-\Phi\!\left(\frac{l_j-\mu_k(\xprefix)}{\sigma_k(\xprefix)}\right)
\right].
\end{aligned}
\end{equation}
Here $\boldsymbol{\eta}$ collects the extrapolation rates, mixture weights, and Gaussian widths. We fit these parameters on the combined direct and transition-path samples, giving each retained sample equal weight. To recover target-model bin probabilities, we multiply the fitted mixture density by the MBAR weight as a function of terminal ARI, integrate the resulting weighted density over each bin using fine-grid quadrature, and normalize once across the full mixture. We cap the number of additional sentence endings at $14$, a heuristic upper bound for the 100-token completions used in our experiments.
\FloatBarrier
\subsubsection{Potential selection algorithm}
\label{app:events-selection}

For each potential $\ppot$, we use the control-variate estimator of \cref{eq:is-potential-cv}.
The coefficient $\alpha$ may either be fixed a priori or estimated from data to minimize variance. In particular, $\alpha=1$ yields the potential estimator, while $\alpha=0$ recovers the baseline estimator regardless of the potential. \Cref{alg:potential-selection} gives the nested cross-fitting procedure used in \Cref{sec:app-events} to select both the potential and, when learned, its coefficient using held-out folds.

\begingroup\setlength{\intextsep}{4pt}
\begin{algorithm}[!htb]
\caption{Potential selection by nested cross-fitting}
\label{alg:potential-selection}
\small
\begin{algorithmic}[1]

\Require Sampled strings $\mathcal D$ with stored next-token
         distributions; target function $f$.
\Require $P$ candidate potentials, each fixed or learned,
         with fitting procedures for learned parameters.
\Require For each candidate, a fixed coefficient $\alpha$
         (default: $1$) or a coefficient-fitting procedure.
\Require Outer and inner fold counts $K$ and $L$.
\Ensure Estimate $\widehat{\theta}$ of $\mathbb E_p[f(Y)]$.

\State Split $\mathcal D$ into $K$ equal-sized outer folds
       $\mathcal D_1,\ldots,\mathcal D_K$.

\For{$k=1,\ldots,K$}
    \State $\mathcal T \gets \mathcal D\setminus\mathcal D_k$
           \Comment{Outer training data}
    \State Split $\mathcal T$ into $L$ equal-sized inner folds
           $\mathcal V_1,\ldots,\mathcal V_L$.

    \For{$j=1,\ldots,P$}
        \For{$\ell=1,\ldots,L$}
            \State $\mathcal A \gets
                   \mathcal T\setminus\mathcal V_\ell$

            \State Fit candidate $j$'s potential on $\mathcal A$,
                   if needed, to obtain $\phi$.
            \State Set $\alpha$ to its fixed value,
                   or fit it on $\mathcal A$.

            \State Evaluate $\widehat{\theta}(y_i;\phi,\alpha)$
                   for every $y_i\in\mathcal V_\ell$.
            \State $v_{j\ell}\gets$
                   sample variance of these estimates.
        \EndFor

        \State $s_j\gets
               \frac{1}{L}\sum_{\ell=1}^{L}v_{j\ell}$
               \Comment{Validation score}
    \EndFor

    \State $j^\star\gets\arg\min_{j=1,\ldots,P}s_j$

    \State Refit the selected potential on $\mathcal T$,
           if needed, to obtain $\phi_k$.
    \State Set $\alpha_k$ to the selected candidate's fixed value,
           or refit it on $\mathcal T$.

    \State $\displaystyle
           \widehat{\theta}_k\gets
           \frac{1}{|\mathcal D_k|}
           \sum_{y_i\in\mathcal D_k}
           \widehat{\theta}(y_i;\phi_k,\alpha_k)$
           \Comment{Held-out estimate}
\EndFor

\State \Return $\displaystyle
       \widehat{\theta}
       =\frac{1}{K}\sum_{k=1}^{K}\widehat{\theta}_k$

\end{algorithmic}
\end{algorithm}

\begin{figure}[!htb]
\centering
\begin{minipage}[c]{0.50\textwidth}
\centering
\pgfmathsetmacro{\gridW}{262.3*6/7} 
\pgfmathsetmacro{\gridH}{114.6}     
\pgfmathsetmacro{\gridCols}{6}
\pgfmathsetmacro{\gridRows}{3}
\pgfmathsetmacro{\gridSub}{4}       
\pgfmathsetmacro{\cellW}{\gridW/(\gridCols*\gridSub)}
\pgfmathsetmacro{\cellH}{\gridH/(\gridRows*\gridSub)}
\pgfmathsetmacro{\gridLW}{0.5}      
\scalebox{0.8}{%
\begin{tikzpicture}[x=\cellW pt,y=\cellH pt,
  box/.style={draw,line width=0.8pt},
  held/.style={fill=red!45,draw=none},
  arr/.style={->,line width=0.8pt,>=stealth},
  lab/.style={font=\scriptsize,inner sep=1pt},
  fan/.style={draw,line width=0.8pt,dotted},
  fanfill/.style={fill=skyblue,fill opacity=0.35,draw=none},
  band/.style={fill=black,fill opacity=0.12,draw=none}]
\pgfmathsetmacro{\gridExtra}{4}    
\pgfmathsetmacro{\gridCut}{3}      
\pgfmathsetmacro{\nx}{\gridCols*\gridSub+\gridExtra-\gridCut}
\pgfmathsetmacro{\gridTrim}{1}     
\pgfmathsetmacro{\ny}{\gridRows*\gridSub-\gridTrim}
\draw[line width=\gridLW pt] (0,0) rectangle (\nx,\ny);
\begin{scope}[shift={(-\gridCut,0)}]  
\foreach \k in {1,...,4}{
  \pgfmathsetmacro{\yb}{11-2*\k}
  \fill[band] (9,\yb) -- (21.5,\yb) -- (22,\yb+0.5) -- (21.5,\yb+1) -- (9,\yb+1) -- cycle;
}
\definecolor{skyblue}{RGB}{135,206,235}
\fill[white] (9,1) -- (9,2) -- (10,10) -- (10,1) -- cycle;
\fill[white] (21,1) -- (21,10) -- (22,2) -- (22,1) -- cycle;
\fill[fanfill] (9,1) -- (9,2) -- (10,10) -- (10,1) -- cycle;
\fill[fanfill] (21,1) -- (21,10) -- (22,2) -- (22,1) -- cycle;
\draw[fan] (9,2) -- (10,10);  \draw[fan] (9,1) -- (10,1);
\draw[fan] (21,10) -- (22,2); \draw[fan] (21,1) -- (22,1);
\node[lab] at (6.5,10.5) {$K=5$ outer folds};
\foreach \k in {1,...,5}{
  \pgfmathsetmacro{\yb}{11-2*\k}
  \fill[held] (3+\k,\yb) rectangle (4+\k,\yb+1);
  \draw[box] (4,\yb) rectangle (9,\yb+1);
}
\node[lab] at (6.5,9.5) {$\str_1,\ldots,\str_B$};  
\foreach \x in {5,6,7,8}{ \draw[line width=0.8pt,dotted] (\x,1) -- (\x,2); }
\foreach \i in {1,...,5}{ \node[lab,text=black!55] at (3.5+\i,1.5) {$\i$}; }
\draw[box,fill=white] (10,1) rectangle (21,10);  
\node[lab] at (15.5,9.5) {Select};
\node[lab] at (15.5,8.5) {$\argmin_{j\le p}\widehat{\Var}_{\scalebox{0.7}{$\scriptstyle\mathrm{CV}$}}(\hest(\ppot_j))$};
\foreach \i in {1,...,4}{ \fill[held] (10+\i,7-\i) rectangle (11+\i,8-\i); }
\draw[line width=0.8pt,step=1] (11,3) grid (15,7);
\draw[box] (11,3) rectangle (15,7);
\foreach \i in {1,...,4}{ \node[lab,text=black!55] at (10.5+\i,6.5) {$\i$}; }
\node[lab] at (13,2.5) {Nested CV};
\node[lab] at (19.5,2.5) {Refit};
\draw[arr] (15,5) -- (16,5);
\draw[box] (16,4) rectangle (18,6);
\node[lab] at (17,5) {$j^{\scalebox{0.6}{$\star$}}$};
\draw[arr] (18,5) -- (19,5);
\draw[box] (19,3) rectangle (20,7);
\foreach \j in {4,5,6}{ \draw[line width=0.8pt] (19,\j) -- (20,\j); }
\foreach \i in {1,...,4}{ \node[lab,text=black!55] at (19.5,7.5-\i) {$\i$}; }  
\foreach \k in {1,...,5}{
  \pgfmathsetmacro{\yb}{11-2*\k}
  \draw[box] (22,\yb) rectangle (23,\yb+1);
  \node[lab,text=black!55] at (22.5,\yb+0.5) {$\k$};
  \node[lab,anchor=west] at (23.2,\yb+0.5) {$\ppot_{j^{\scalebox{0.6}{$\star$}}}(\xprefix;\widehat{\boldsymbol{\eta}}_{\scalebox{0.8}{$\scriptstyle\k$}})$};
}
\end{scope}
\end{tikzpicture}}
\end{minipage}\hfill
\begin{minipage}[c]{0.47\textwidth}
\caption{\footnotesize Nested cross-fitting, our in-sample selection among $p$
candidate potentials (\Cref{alg:potential-selection}). Each of $K=5$ outer
folds is held out in turn (red); on the remaining folds the candidates are
compared by inner cross-validation, the winner $j^\star$ is refitted on all
remaining data and applied to the held-out fold.
\Timv{How is the number of folds $K$ chosen and how much does it affect the results?  Would leave-one-out be any better/different?  Is the issue the computational cost of fitting that many classifiers too high?}
}
\label{fig:nested-cv-scheme}
\end{minipage}
\end{figure}
\endgroup

\clearpage

\subsubsection{Compute costs and additional results}
\label{app:events-results}

\paragraph{Compute.}
All potential fitting, scoring, and selection run on CPUs, with no GPU computation beyond sampling. The logistic potential has $880$ coefficients ($11$ features across $80$ bins), while the mixture model stores empirical rates, mixture weights, and residual widths. We use five outer folds, with the four remaining folds serving as the inner folds. Each inner fit uses three folds: exchanging the outer held-out fold and the inner validation fold leaves the training set unchanged. Thus, the $5\times4=20$ inner evaluations require only $\binom{5}{2}=10$ distinct fits, which we reuse across outer splits. Together with $5$ outer refits and $1$ full-data fit, this gives $16$ parameter sets per model. Fitting these takes $14$ minutes for logistic regression and $4$ minutes for the mixture model on a single $32$-core node. Scoring both potentials using the stored next-token distributions requires ten parallel jobs, each using $32$ cores for $50$--$95$ minutes.

\paragraph{Additional results.}
\Cref{fig:nested-cv-scheme} sketches the selection procedure and \Cref{fig:nested-cv-full} shows the variance of the selected estimator over the bins on which MBAR's own variance is well measured\claude{, \Cref{fig:nested-cv-candidates} the same view for each single candidate}, \Cref{fig:nested-allbins} extends the comparison to all $64$ populated bins with the accompanying diagnostics, and \Cref{tab:rea-nested-full} lists the chain-level variance, sample-size efficiency, and MBAR-equivalent sample size of every estimator.

\begin{figure}[!htb]
\centering
\includegraphics[height=134.05pt]{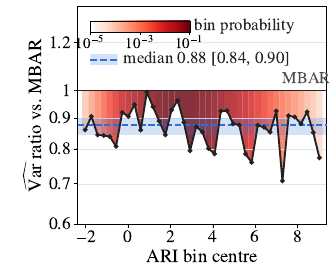}
\vspace{-4pt}
\caption{\footnotesize Variance ratio of the selected potential on the event-analysis
experiment: chain-level variance of the cross-fitted estimator relative to
MBAR, per ARI bin, over \claude{the $39$ bins $20$--$58$: the $38$ bins on which MBAR's own variance is itself
measured to within $20\%$, plus bin $57$ (at $25\%$), kept so that the range is contiguous}; shading gives the bin probability, the dashed line
the median ($0.88$) and the band its $95\%$ interval. All $64$ populated bins
are in \Cref{fig:nested-allbins}.}
\label{fig:nested-cv-full}
\end{figure}

\begin{figure}[!htb]
\centering
\begin{minipage}[t]{0.46\textwidth}\centering{\footnotesize\textbf{(a)} terminal potential}\\[1pt]\includegraphics[height=134.05pt]{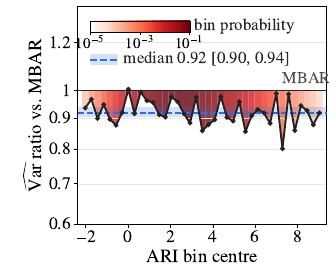}\end{minipage}\hfill
\begin{minipage}[t]{0.46\textwidth}\centering{\footnotesize\textbf{(b)} terminal potential with $\alpha$}\\[1pt]\includegraphics[height=134.05pt]{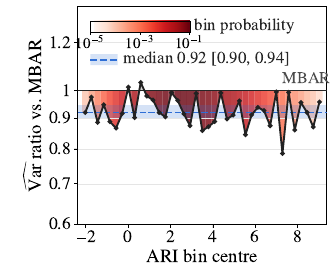}\end{minipage}\\[6pt]
\begin{minipage}[t]{0.46\textwidth}\centering{\footnotesize\textbf{(c)} logistic model}\\[1pt]\includegraphics[height=134.05pt]{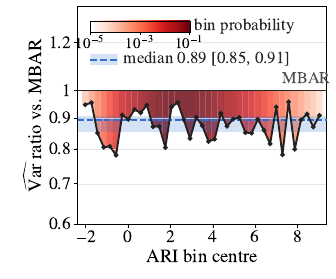}\end{minipage}\hfill
\begin{minipage}[t]{0.46\textwidth}\centering{\footnotesize\textbf{(d)} mixture model}\\[1pt]\includegraphics[height=134.05pt]{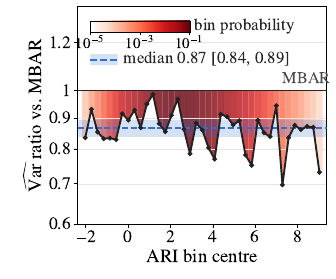}\end{minipage}
\vspace{-4pt}
\caption{\footnotesize \claude{The same view as \Cref{fig:nested-cv-full} for each single candidate, each refitted without the held-out fold and applied to every bin: chain-level variance relative to MBAR per ARI bin over the same $39$ bins ($38$ by the $20\%$ rule plus bin $57$), with the bin probability as shading and the candidate's own median over these bins with its $95\%$ interval. The terminal potential and its calibrated version remove $8\%$, the logistic model $11\%$, and the mixture model $13\%$, against $12\%$ for the selection, which picks the mixture model in four folds and the logistic model in one.}}
\label{fig:nested-cv-candidates}
\end{figure}

\begin{figure}[!htb]
\centering
\includegraphics[width=\textwidth]{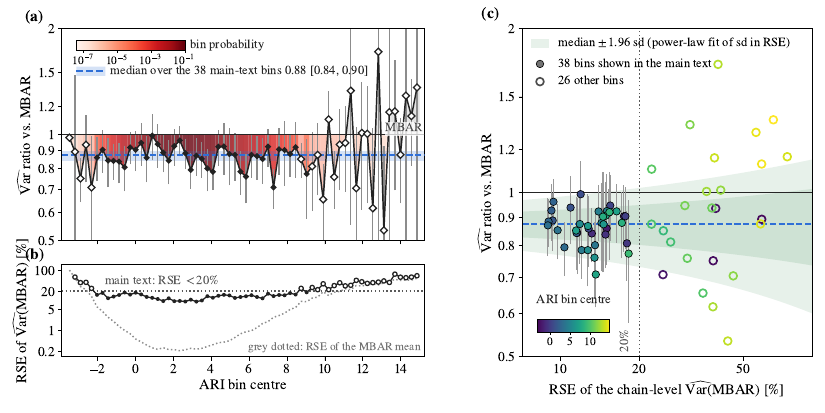}
\caption{\footnotesize Variance reduction across ARI bins. \textbf{(a)}  variance of the cross-fitted estimator relative to MBAR for all $64$ populated bins. Filled markers denote the $38$ bins for which the MBAR variance is estimated with RSE below $20\%$; over these bins, the median variance ratio is $0.88$ with a $95\%$ bootstrap interval of $[0.84,0.90]$\claude{, and over all $64$ populated bins it is $0.88$ with interval $[0.86,0.91]$}. Ratios in the remaining tail bins are much noisier because the MBAR variance itself is poorly estimated. Whiskers show $\pm1.96$ bootstrap standard deviations from $1{,}000$ paired chain-bootstrap replicates. \textbf{(b)} RSE of the MBAR variance and mean estimates across bins, showing that uncertainty in the variance estimate grows sharply in the tails. \textbf{(c)} Variance ratios against the RSE of the MBAR variance estimate. Among the $38$ reliable bins, the ratio shows no systematic dependence on baseline uncertainty (Spearman $\rho=0.05$, $p=0.78$), indicating that the observed reduction is not driven by noise in the MBAR variance estimate.}
\label{fig:nested-allbins}
\end{figure}

\begin{table}[!htb]
\centering\small\setlength{\tabcolsep}{4pt}
\begin{tabular}{l c c c}
\toprule
Estimator & Var.\ ratio vs.\ MBAR [95\% CI] & Sample-size eff.\ [95\% CI] & MBAR-equiv.\ samples ($\times10^6$) [95\% CI] \\
\midrule
Nested selection & 0.88 [0.86, 0.91] & 1.14 [1.10, 1.17] & 7.58 [7.34, 7.80] \\
Terminal & 0.92 [0.90, 0.94] & 1.09 [1.06, 1.11] & 7.29 [7.09, 7.40] \\
Terminal, $\alpha$ & 0.92 [0.90, 0.95] & 1.09 [1.06, 1.11] & 7.27 [7.05, 7.40] \\
Logistic & 0.90 [0.86, 0.91] & 1.12 [1.09, 1.16] & 7.46 [7.31, 7.75] \\
Mixture model & 0.87 [0.84, 0.90] & 1.15 [1.11, 1.19] & 7.69 [7.43, 7.92] \\
\bottomrule
\end{tabular}
\caption{Event analysis: chain-level variance of each estimator relative to MBAR, median over all $64$ populated ARI bins, with $95\%$ intervals from $1{,}000$ paired chain-bootstrap replicas (fitted potentials, selections, and coefficients held fixed). Sample-size efficiency: median over bins of $\mathrm{Var}(\mathrm{MBAR})/\mathrm{Var}(\text{estimator})$; MBAR-equivalent samples: that factor times the actual sample size $N=6.68\times10^{6}$ states.}
\label{tab:rea-nested-full}
\end{table}

\end{document}